\documentclass{article}

\usepackage{microtype}
\usepackage{graphicx}
\usepackage{subcaption}
\usepackage{booktabs} 

\usepackage{hyperref}

\usepackage{algorithmic}

\usepackage[accepted]{icml2026}

\usepackage{amsmath}
\usepackage{amssymb}
\usepackage{mathtools}
\usepackage{amsthm}

\usepackage{url}

\usepackage{multirow}
\usepackage{pifont}
\usepackage{wrapfig}

\usepackage{color, colortbl}
\definecolor{Gray}{gray}{0.9}

\definecolor{firstcolor}{RGB}{20,128,85}
\definecolor{secondcolor}{RGB}{20,104,168}
\definecolor{thirdcolor}{RGB}{236,84,20}

\newcommand{\first}[1]{\textcolor{firstcolor}{\underline{\mathbf{#1}}}}
\newcommand{\second}[1]{\textcolor{secondcolor}{\underline{#1}}}
\newcommand{\third}[1]{\textcolor{thirdcolor}{\underline{#1}}}

\usepackage[capitalize,noabbrev]{cleveref}

\theoremstyle{plain}
\newtheorem{theorem}{Theorem}[section]

\theoremstyle{definition}

\theoremstyle{remark}

\usepackage[textsize=tiny]{todonotes}

\icmltitlerunning{Generalist Graph Anomaly Detection via Prototype-Based Distillation}

\begin{document}

\twocolumn[
  \icmltitle{Generalist Graph Anomaly Detection via Prototype-Based Distillation}



  \icmlsetsymbol{equal}{*}






  \begin{icmlauthorlist}
    \icmlauthor{Yiming Xu}{xjtu,lab}
    \icmlauthor{Zihan Chen}{uva}
    \icmlauthor{Zhen Peng}{xjtu,lab}
    \icmlauthor{Song Wang}{ucf}
    \icmlauthor{Bin Shi}{xjtu,lab}
    \icmlauthor{Bo Dong}{lab,xjtu_de}
    \icmlauthor{Chao Shen}{xjtu_shen}
  \end{icmlauthorlist}

  \icmlaffiliation{xjtu}{School of Computer Science and Technology, Xi'an Jiaotong University, Xi'an, China}
  \icmlaffiliation{lab}{National Engineering Research Center for Visual Information and Applications, Xi'an, China}
  \icmlaffiliation{uva}{University of Virginia, Charlottesville, USA}
  \icmlaffiliation{ucf}{University of Central Florida, Orlando, USA}
  \icmlaffiliation{xjtu_de}{School of Distance Education, Xi’an Jiaotong University, Xi'an, China}
  \icmlaffiliation{xjtu_shen}{School of Cyber Science and Engineering, Xi'an Jiaotong University, Xi'an, China}

  \icmlcorrespondingauthor{Bin Shi}{shibin@xjtu.edu.cn}
  \icmlcorrespondingauthor{Zhen Peng}{zhenpeng@xjtu.edu.cn}

  \icmlkeywords{Machine Learning, ICML}

  \vskip 0.1in
]



\printAffiliationsAndNotice{}  

\begin{abstract}
tDriven by the pressing demand for graph anomaly detection (GAD) in high-stakes domains, the generalist GAD paradigm, which trains a single detector transferable across new graphs, has recently gained growing attention. However, existing methods often rely on scarce and costly annotations for training and sometimes even require few-shot support at inference, which limits their robustness to diverse and unseen anomaly patterns. To address this limitation, we introduce ProMoS, the first unsupervised generalist GAD framework, which detects anomalies by modeling the abundant normality in unlabeled data.
ProMoS adopts a knowledge-distillation paradigm to distill normality priors from a frozen self-supervised graph neural network (GNN) teacher to a mixture-of-students model with shared global and lightweight personalized branches, enabling efficient and expressive normality modeling without learning from scratch. 
We further propose prototype-guided soft-label distillation to align teacher and student in a shared prototype space, enhancing cross-graph generalizability. 
During inference, ProMoS performs zero-shot anomaly detection on unseen graphs via distillation bias and prototype geometric deviation.
Extensive experiments show the effectiveness and efficiency of ProMoS, charting a practical path toward label-free, zero-shot generalist GAD. \footnote{The source code and datasets are available at: https://github.com/yimingxu24/ProMoS.}

\end{abstract}

\section{Introduction}
Graph anomaly detection (GAD) has attracted attention in high-stakes domains modeled with graphs~\citep{ma2021comprehensive, zheng2024survey, xu2025court}, such as finance~\citep{xu2025ted}, social networks~\citep{xu2022evidence}, and cybersecurity~\citep {wang2022wrongdoing}, by identifying deviations from normal patterns automatically~\citep{wang2025open,li2025diffgad}.
Despite recent progress, conventional GAD methods~\citep{liu2021anomaly,bwgnn_tang2022rethinking,tam_qiao2024truncated,xu2025revisiting} require retraining and extensive hyperparameter tuning to handle each new coming graph, incurring prohibitive computational and operational costs that are untenable for large-scale or latency-sensitive applications~\citep{niu2024zero}.
To overcome these limitations, recent studies explore generalist GAD models that are trained once and generalize across graphs without any retraining on the target graph~\citep{liu2024arc}.

\definecolor{myred}{RGB}{215,48,39}
\definecolor{mygreen}{RGB}{26,152,80}
\newcommand{\cmark}{\textcolor{mygreen}{\ding{51}}}
\newcommand{\xmark}{\textcolor{myred}{\ding{55}}}
\begin{table*}[t]
\centering
\small
\renewcommand{\baselinestretch}{1.151}
\caption{A brief comparison of representative GAD methods. 
“Zero-shot” refers to inference on unseen graphs without using any labeled data for fine-tuning or adaptation; SSL refers to graph self-supervised learning (pre-trained) models.}
\begin{tabular}{lccccc}
\toprule
\textbf{Method}                             & \textbf{Unsupervised}     & \textbf{Zero-shot}     & \textbf{Generalist}  & \textbf{Training Paradigm}\\\midrule
DOMINANT~\citep{ding2019deep}    & \cmark    &  \cmark    &  \xmark   & Train from scratch \\
CoLA~\citep{liu2021anomaly}    & \cmark    &  \cmark    &  \xmark   &  Train from scratch \\
TAM~\citep{tam_qiao2024truncated}    & \cmark    &  \cmark    &  \xmark   &  Train from scratch \\
ARC~\citep{liu2024arc} & \xmark    &  \xmark    &  \cmark   &  Train from scratch \\
UNPrompt~\citep{niu2024zero}    & \xmark    &  \cmark    &  \cmark   &  Train from scratch \\
AnomalyGFM~\citep{qiao2025anomalygfm}  & \xmark    &  \cmark    &  \cmark   &  Train from scratch \\\midrule
\textbf{ProMoS (Ours)}  & \cmark    &  \cmark    &  \cmark   &  \textbf{Transferred from SSL} \\
\bottomrule
\end{tabular}
\label{tab:comparison}
\end{table*}

Although effective in some scenarios, existing generalist GAD methods still heavily rely on labeled supervision~\citep{liu2024arc,niu2024zero,qiao2025anomalygfm}. In practice, anomaly labels are scarce, costly~\citep{ma2024graph}, and intrinsically unable to cover the constantly evolving space of abnormal behavior in the open world, since no fixed annotation set can fully enumerate this variability~\citep{sricharan2014localizing,wang2019detecting}. In contrast, large graphs naturally provide abundant unlabeled signals~\citep{liu2022graph,xie2022self,xu2023cldg}: normal nodes dominate the graph and collectively harbor rich and robust normal patterns of behavior and connectivity that reflect the underlying regularities of the graph~\citep{cao2025use}, with anomalies emerging as significant deviations from these patterns. Motivated by these insights, we ask whether a \emph{generalist} GAD model can be trained once in an \emph{unsupervised} manner and then transferred across graphs.

Realizing this vision is non-trivial and poses two key challenges.
\ding{182}~\textit{Comprehensive normality modeling}.
A fundamental challenge is to learn comprehensive and representative normal patterns without labeled data~\citep{ma2021comprehensive,qiao2025deep}. 
An ill-designed unsupervised objective may recover a narrow and unrepresentative manifold of normal patterns from the data. Leveraging such a biased characterization of normality for anomaly detection inevitably leads to degraded performance~\citep{cao2025use}.
\ding{183}~\textit{Cross-graph heterogeneity gap}. Significant discrepancies exist across graphs in both node-attribute semantics and topological characteristics, posing a critical challenge to cross-graph transfer. For instance, financial transaction graphs contain scalar fields like monetary amounts and exhibit hub-centric structures, whereas social networks contain unstructured text such as posts and bios, while exhibiting community-centric connectivity~\citep{ruff2021unifying}. The prevailing unsupervised GAD methods primarily rely on objectives like within-graph instance discrimination~\citep{liu2021anomaly,xu2025revisiting} or feature reconstruction~\citep{ding2019deep,zou2024structural}, tend to overfit to dataset-specific fine-grained patterns. This overfitting severely hinders the generalization of learned normal patterns to graphs with different data distributions.  \looseness=-1

To tackle the above challenges, we propose \textbf{ProMoS}, a \textbf{Pro}totype-guided \textbf{M}ixture-\textbf{o}f-\textbf{S}tudents framework for unsupervised generalist GAD, as shown in Table~\ref{tab:comparison}. 
Specifically, to address Challenge \ding{182}, we propose a knowledge distillation (KD) framework that distills normality priors from a well-trained self-supervised GNN into the student module, departing from the conventional practice of learning normal patterns from scratch. To balance expressiveness and efficiency, the student module devises a mixture-of-students (MoS) architecture, with a shared branch capturing global regularities and a sparsely activated personalized branch modeling diverse local normal patterns.
To address Challenge \ding{183}, we propose prototype-guided soft-label distillation, which aligns teacher and student predictions with a set of learnable semantic prototypes initialized by clustering features, thereby avoiding reliance on instance-level or feature-level fine-grained modeling. To further enhance transferability, we introduce a discrepancy-aware commitment and refinement objective that uses sample reliability-weighted to enforce stability in the teacher’s semantic space across graphs while continually updating prototypes to encode higher-quality transferable semantics.
Theoretically, we prove that the expected prediction error of the MoS framework is no greater than that of any individual student, ensuring its effectiveness. During inference, ProMoS requires \textit{no} retraining or fine-tuning. Anomaly scores are derived by combining two complementary signals: prototype-level distillation bias and geometric deviation, enabling robust zero-shot detection on unseen graphs. Our contributions are summarized as follows:

$\bullet$ We propose the \textit{first} unsupervised generalist GAD framework, ProMoS, which eliminates the reliance on labeled data for cross-graph generalization. Our work establishes a new pathway to take full advantage of large-scale unlabeled graph data for efficient and scalable anomaly detection.

$\bullet$ We propose a novel unsupervised KD framework for generalist GAD that transfers priors from a pre-trained graph SSL teacher and introduces a MoS module to balance expressiveness and efficiency. A tailored loss suite, comprising prototype distillation and discrepancy-aware commitment and refinement, further improves cross-graph generalizability. \looseness=-1

$\bullet$ Extensive experiments on 11 real-world graphs demonstrate that ProMoS shows superior generalization to state-of-the-art supervised, unsupervised, and generalist GAD baselines, while incurring lower computational overhead.

\section{Related Work}
\textbf{Anomaly Detection on Graphs.} 
The rapid progress of GNNs~\citep{xu2025out} has substantially advanced research on GAD~\citep{shi2023edge,xu2025text}. Existing GNN-based GAD approaches can be broadly categorized into supervised and unsupervised methods. Supervised methods~\citep{bwgnn_tang2022rethinking,ghrn_gao2023addressing} rely on limited anomaly labels to learn explicit decision boundaries, but often suffer from overfitting to seen anomalies and tend to misclassify unseen anomalies as normal~\citep{wang2025open}. In contrast, unsupervised methods have gained increasing attention for their label-agnostic nature~\citep{qiao2025deep}, typically leveraging reconstruction errors~\citep{ding2019deep,roy2024gad,zou2024structural} or contrastive~\citep{liu2021anomaly,zhang2024graph,xu2024learning} proxy tasks to model normal patterns and detect anomalies. While both paradigms achieve promising results under the conventional setting where training and inference are performed on the same graph, they typically break down under cross-graph settings, revealing a critical generalization gap~\citep{liu2024arc}. 

\noindent\textbf{Generalist Anomaly Detection.}
Generalist anomaly detection has recently gained traction as a promising direction for addressing the challenges of label scarcity and poor model generalization in anomaly detection~\citep{yao2024resad}. Inspired by advances in image anomaly detection~\citep{zhu2024toward}, several early studies have explored supervised generalist GAD models and demonstrated initial effectiveness~\citep{liu2024arc,niu2024zero,qiao2025anomalygfm}. However, these approaches typically rely on extensive labeled data to learn transferable representations and domain knowledge. For example, ARC~\citep{liu2024arc} requires substantial annotations to train not only the encoder but also its context module, and still depends on a few target-domain samples during inference. UNPrompt~\citep{niu2024zero} utilizes label-driven soft prompts, while AnomalyGFM~\citep{qiao2025anomalygfm} explicitly constructs class-specific priors for normal and anomalous categories. In contrast to these label-intensive methods, we take the first step toward unsupervised generalist GAD, aiming to learn cross-graph transferable normality patterns without any annotations. Our framework provides a new perspective for achieving zero-shot anomaly detection across diverse graphs.

\noindent\textbf{Knowledge Distillation on Graphs.} 
Knowledge distillation (KD)~\citep{hinton2014distilling} was first introduced to facilitate model compression and the creation of resource-efficient architectures. Later, it was extended to various domains, including image and video anomaly detection~\citep{georgescu2021anomaly,zhang2023destseg}. 
Graph KD has gained traction in recent years~\citep{tian2025knowledge}, yet its adoption in GAD remains limited~\citep{qiao2025deep}. Most existing efforts target graph-level GAD tasks~\citep{ma2022deep,lin2023discriminative,cai2024fgad}, while node-level GAD has received comparatively little attention. Moreover, prevailing methods adopt a \emph{one-teacher–one-student} paradigm that aligns hidden states or logits, neglecting the design of student architectures and failing to address cross-graph generalization. In this work, we advance KD for GAD by (i) establishing its feasibility at the node-level GAD and (ii) introducing a mixture-of-students that aligns student outputs to teacher-derived prototype soft labels, improving cross-graph transferability and generalization.

\begin{figure*}
  \centering
  \includegraphics[width=0.96\linewidth]{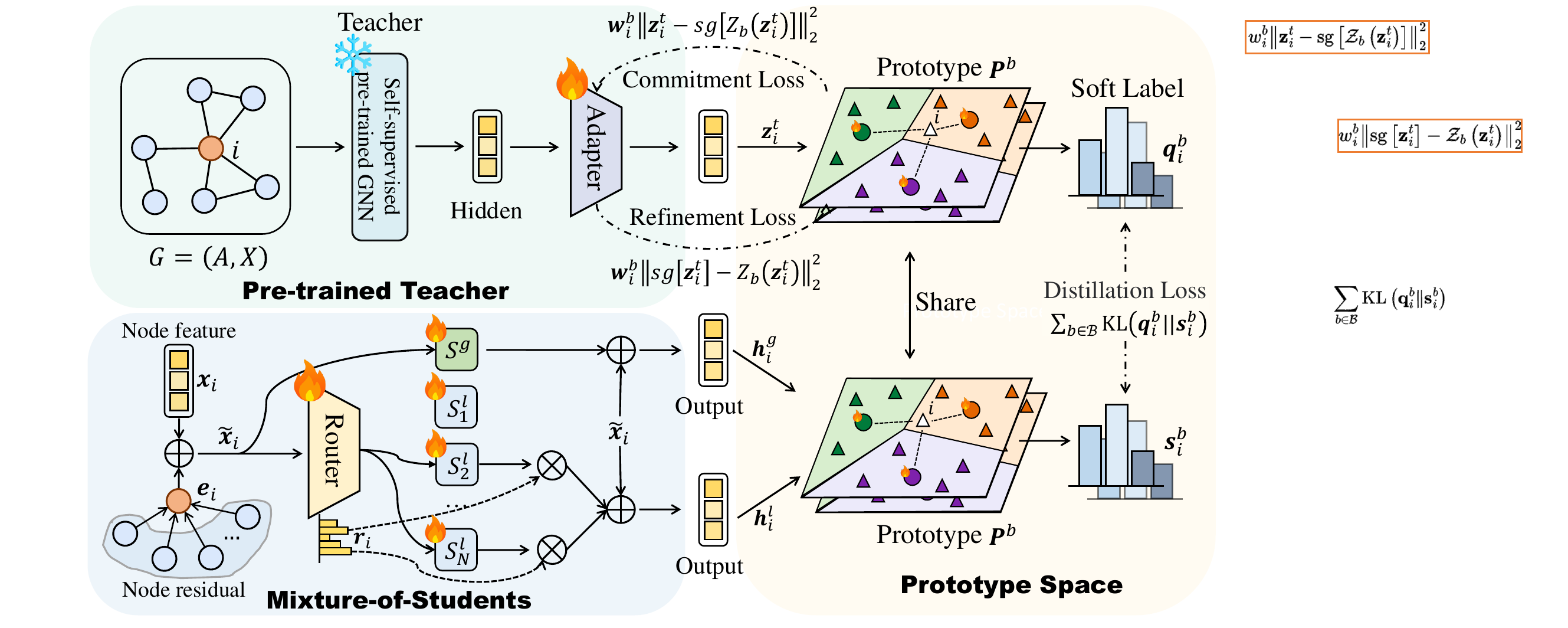}
  \caption{The architecture of the ProMoS. During training, a frozen self-supervised GNN teacher guides a Mixture-of-Students via prototype-guided soft-label distillation, while discrepancy-aware commitment and refinement objectives stabilize teacher outputs and refine the prototype for cross-graph consistency. During inference, anomalies are identified by fusing distillation bias with geometric deviation, enabling zero-shot detection on unseen graphs.}
  \label{fig:overview}
\end{figure*}

\section{Methodology}
\subsection{Preliminaries}
\noindent\textbf{Notation.}
An attributed graph is denoted as \(\mathcal{G}=(\mathcal V,\mathcal E,\mathbf X)\) with a node set \(\mathcal V=\{v_i\}_{i=1}^n\), en edge set  \( \mathcal{E} \subseteq \mathcal{V} \times \mathcal{V} \), and the node feature matrix \(\mathbf X\in\mathbb R^{n\times d}\). Its topological structure is recorded in the adjacency matrix \(\mathbf A\in\{0,1\}^{n\times n}\) where \(A_{ij}=1\) iff \((v_i,v_j)\in\mathcal E\).
In GAD, binary labels \(y_i\in Y \subset \{0,1\}^n\) split the nodes into a normal set \(\mathcal{V}_n=\{v_i\,|\,y_i=0\}\) and an anomalous set \(\mathcal{V}_a=\{v_i\,|\,y_i=1\}\) where \(\mathcal{V}_n\cap\mathcal{V}_a=\varnothing\), \(\mathcal{V}_n\cup\mathcal{V}_a=\mathcal{V}\), and \(|\mathcal{V}_n|\gg|\mathcal{V}_a|\).

\noindent\textbf{Conventional GAD Setting.}
Most existing GAD studies follow the \emph{one-graph-one-model} protocol: a detector is trained and deployed on the same graph \(\mathcal{G}\). Learning proceeds in either a supervised or unsupervised fashion to obtain a scoring function \(f:\mathcal V\!\to\!\mathbb R\) that ranks abnormal nodes higher than normal ones in reverse order, i.e., \(f(v_i)\!>\!f(v_j)\) for \(\forall\,v_i\in\mathcal V_a,\;v_j\in\mathcal V_n\). Once trained, \(f\) is applied at the inference phase to identify anomalous nodes within the same graph \(\mathcal{G}\).

\noindent\textbf{Generalist GAD Setting.}  
Generalist GAD aims to build a universal anomaly scoring model \( f \) from an collection of training graphs \(\mathcal{T}_{\text{train}}=\{(\mathcal{G}_{\text{train}}^{(1)},Y^{(1)}),\dots,(\mathcal{G}_{\text{train}}^{(N)}, Y^{(N)})\}\). The trained \( f \) is directly applicable, without fine-tuning or re-training, to unseen test graphs \(\mathcal{T}_{\text{test}}=\{\mathcal{G}_{\text{test}}^{(1)},\dots, \mathcal{G}_{\text{test}}^{(n)}\}\) drawn from diverse domains and distributions, satisfying \(\mathcal{T}_{\text{train}}\cap \mathcal{T}_{\text{test}}=\varnothing\) and even allowing for distribution differences between them. 
Previous generalist GAD approaches rely on labels $Y$ from training graphs \(\mathcal{T}_{\text{train}}\) and even require few-shot samples at inference~\citep{liu2024arc}. In contrast, our study specifically emphasizes the unsupervised development of \( f \), relying solely on patterns learned from $\{\mathcal{G}_{\text{train}}^{(1)},\dots,\mathcal{G}_{\text{train}}^{(N)}\}$ to detect anomalies in novel target graphs. \looseness=-1

\subsection{Mixture-of-Students Guided by Teacher Knowledge}
\noindent\textbf{Pre-trained Teacher.} 
Self-supervised GNN pretraining captures patterns of neighbor matching in unlabeled graphs, yielding robust representations that encode strong normality priors due to the dominance of normal nodes in real-world graphs~\citep{hou2024graphalign,zhao2025survey}. Rather than relearning these regularities from scratch, we adopt a knowledge distillation framework that reuses a well-trained self-supervised GNN encoder as the teacher and transfers its normality priors to the student. Formally, given a graph $\mathcal{G}$, the teacher representations are computed as:
\begin{equation}
\label{eq:adapter}
\mathbf{U}_T=f_T(\mathbf{X}, \mathbf{A}), \quad \mathbf{Z}_T=g_\phi\left(\mathbf{U}_T\right),
\end{equation}
where the parameters of the teacher encoder $f_T$ are frozen. The lightweight adapter $g_\phi$ (e.g., a single-layer perceptron) is the only trainable component on the teacher side. Its parameters $\phi$ are jointly optimized with the student branches, ensuring the teacher’s knowledge remains calibrated and consistent across diverse graph domains (details elaborated in Section~\ref{sec:objective}).

\noindent\textbf{Mixture-of-Students.} 
To balance expressiveness and efficiency, we employ a mixture-of-students (MoS) with one \emph{shared} and one \emph{personalized} branch, enabling the student to capture the diverse normality patterns encoded by a frozen teacher. First, each branch operates on an enhanced node representation that augments raw features $\mathbf{x}_i$ with a residual representation $\mathbf{e}_i$, which is widely recognized to improve generalization~\citep{qiao2025anomalygfm}. The resulting enhanced representation is defined as: \looseness=-1
\begin{equation}
\mathbf{e}_i = \mathbf{x}_i - \frac{1}{|\mathcal{N}(v_i)|}\sum_{v_j \in \mathcal{N}(v_i)} \mathbf{x}_j,
\quad
\tilde{\mathbf{x}}_i = \mathbf{x}_i + \mathbf{e}_i,
\end{equation}
where $\tilde{\mathbf{x}}_i$ is enhanced node representation. $\mathcal{N}(v_i)$ is the neighbor set of node $i$.

Given $\tilde{\mathbf{x}}_i$, the shared student $S^{g}$ remains always \emph{active}, learning global normality signals from the teacher, with the shared branch formally expressed as:
\begin{equation}
\mathbf{h}_i^{g} = S^{g}\left(\tilde{\mathbf{x}}_i ; \theta_g\right) = f_g\left(\tilde{\mathbf{x}}_i \odot \mathbf{m}_i\right) + \tilde{\mathbf{x}}_i,
\end{equation}
where $\mathbf{h}_i^{g}\!\in\!\mathbb{R}^{d}$ is the shared student output, $f_g$ is a lightweight MLP with parameters $\theta_g$, $\odot$ denotes the Hadamard product, and $\mathbf{m}_i$ is a random binary mask for robustness~\citep{he2022masked}. The identity skip stabilizes training and encourages $f_g$ to model teacher-induced aggregation deltas.

In contrast, the personalized branch hosts a pool of $N$ lightweight student models $\{S_p^{\ell}\}_{p=1}^{N}$. The personalized branch first employs a routing network to compute sparse activation weights, dynamically selecting a small subset of students based on masked node features to capture specialised local normality patterns. Formally, the routing network computes student selection scores as follows:
\begin{equation}
\begin{alignedat}{2}
\mathbf{r}_i 
&= \operatorname{softmax}\!\left(\mathbf{W}_r \tilde{\mathbf{x}}_i\right), \\[2pt]
g_{i,p}
&=
\begin{cases}
\mathbf{r}_i[p], 
& \mathbf{r}_i[p] \in \operatorname{Top}\text{-}K(\mathbf{r}_i), \\
0, 
& \text{otherwise},
\end{cases}
\end{alignedat}
\label{eq:route}
\end{equation}
where $\mathbf{r}_i \in \mathbb{R}^{N}$ is the routing probability vector, and $g_{i,p}$ denotes the sparse gating weight for student $p$. $\mathbf{W}_r \in \mathbb{R}^{N \times d}$ are trainable router parameters. Only the top-$K$ students receive non-zero weights, enforcing sparse activation. Given the routing weights, the personalized representation of node $v_i$ is computed as:
\begin{equation}
\label{eq:stu_rep}
\mathbf{h}_i^{\ell} = S^{\ell}\left(\tilde{\mathbf{x}}_i ; \theta_\ell\right) = \sum_{p=1}^{N} \left( g_{i,p} \cdot f_p(\tilde{\mathbf{x}}_i \odot \mathbf{m}_i) \right) + \tilde{\mathbf{x}}_i,
\end{equation}
where $\mathbf{h}_i^{\ell} \in \mathbb{R}^{d}$ is the output of the personalized branch, 
$f_p(\cdot)$ denotes the $p$-th student parameter.

\subsection{Training Objective}\label{sec:objective}

\noindent\textbf{Prototype-Driven Soft-Label Distillation.}
To transfer generalizable normality priors from the frozen teacher to lightweight student branches, we introduce a prototype-driven soft-label distillation strategy. Instead of enforcing instance-level feature matching, we rely on learnable prototype codebooks that serve as semantic anchors, initialized from the node features in the training graph via k-means clustering~\citep{douze2024faiss} and loaded as trainable parameters. 
For each branch $b \in \mathcal{B}=\{g,\ell\}$ (shared $g$, personalized $\ell$), we maintain a prototype codebook $\mathbf{P}^{b} = [\mathbf{p}_1^{b}, \dots, \mathbf{p}_{M_b}^{b}] \in \mathbb{R}^{M_b \times d}$.
Here $M_b$ denotes the number of prototypes assigned to branch $b$. For a given branch $b$, with teacher representation $\mathbf{z}_i^t$ and the corresponding student output $\mathbf{h}_i^{b}$ of node $i$, we compute their distributions over the prototypes $\mathbf{P}^{b}$ as follows:
\begin{equation}
\begin{aligned}
\mathbf{q}_{i}^{b}[m] 
&= 
\frac{\exp\!\left( \mathrm{sim}(\mathbf{z}_i^t, \mathbf{p}_{m}^{b}) / \tau \right)}
{\sum_{m'=1}^{M_b} \exp\!\left( \mathrm{sim}(\mathbf{z}_i^t, \mathbf{p}_{m'}^{b}) / \tau \right)}, \\
\mathbf{s}_{i}^{b}[m] 
&= 
\frac{\exp\!\left( \mathrm{sim}(\mathbf{h}_i^b, \mathbf{p}_{m}^{b}) / \tau \right)}
{\sum_{m'=1}^{M_b} \exp\!\left( \mathrm{sim}(\mathbf{h}_i^b, \mathbf{p}_{m'}^{b}) / \tau \right)},
\end{aligned}
\end{equation}
where $\mathbf q_{i}^{b}[m]$ and $\mathbf s_{i}^{b}[m]$ denote the probabilities of assigning node $i$ to the $m$-th prototype, produced by the teacher and the student under branch $b$, respectively. The similarity function is the negative squared Euclidean distance, i.e., $\mathrm{sim}(\mathbf{z}_i^{t}, \mathbf{p}_m^b) = - \left\| \mathbf{z}_i^{t} - \mathbf{p}_m^b \right\|_2^2$, and $\tau$ is the temperature coefficient.

Finally, the prototype-driven distillation loss minimizes the Kullback–Leibler (KL) divergence between teacher- and student-induced distributions across all branches:
\begin{equation}
\mathcal{L}_{\text{PSD}}
= \frac{1}{ \left | \mathcal V\right |}\sum_{i=1}^{ \left | \mathcal V\right |}\;\sum_{b \in \mathcal{B}} \mathrm{KL}(\mathbf q_i^{b}\,\Vert\, \mathbf s_i^{b}),
\end{equation}
where $ \left | \mathcal V\right |$ is the number of nodes in the graph and $\mathbf q_i^{b}$ are the teacher-provided soft labels.

This objective guides the students to capture prototype-level semantics distilled from the teacher, where prototypes act as high-level and abstract concepts that are easier to transfer across graphs, rather than overfitting to instance-specific details. We theoretically guarantee the effectiveness of MoS, with detailed proofs provided in Appendix~\ref{sec:proof}.

\noindent\textbf{Discrepancy-aware Commitment and Refinement} 
Graphs from different domains often exhibit substantial semantic and structural heterogeneity~\citep{liu2024arc}, leading to unordered or unstable feature spaces encoded by the frozen teacher, which in turn hampers generalization across graphs.
Moreover, without proper constraints, prototypes may remain largely underutilized, thereby limiting semantic coverage and diminishing representational diversity~\citep{li2021prototypical}.
To mitigate these issues, we introduce two complementary objectives with distinct roles. The commitment loss regularizes the adapter outputs in Eq.~\ref{eq:adapter} by pulling teacher representations toward stable prototype anchors, thereby enforcing a consistent and well-structured feature space across graphs. In contrast, the refinement loss updates the prototype to capture high-level transferable semantics. Together, these objectives ensure that the teacher features become prototype-aligned and that the prototypes evolve into meaningful semantic centers.
Specifically, each teacher representation is first quantized to its nearest prototype:
\looseness=-1
\begin{equation}
\mathcal{Z}_b(\mathbf{z}_i^{t}) = \mathbf{p}_{m_i^\star}^b, 
\quad 
m_i^\star = \arg\min_{m \in [M_b]} \; \big\|\mathbf{z}_i^{t} - \mathbf{p}_m^b\big\|_2^{2},
\end{equation}
where $\mathcal{Z}_b(\mathbf{z}_i^{t})$ denotes the quantized representation of node $i$ under branch $b$, 
$\mathbf{p}_{m_i^\star}^b$ is the $m_i$-th prototype in branch $b$.

However, directly optimizing commitment and refinement on real-world graphs is challenging because anomalous nodes inject misleading gradients, which bias prototype updates and regularizes the adapter outputs toward suboptimal alignments.
To address this issue, we propose a \emph{discrepancy-aware weighting} mechanism that adaptively downweights unreliable nodes. Specifically, we first construct the prototype–prototype relation matrix $\mathbf Q^{b} = \operatorname{softmax}\!\left(\mathrm{sim}(\mathbf P^{b}, \mathbf P^{b})/{\tau}\right)$, which provides a global semantic structure among prototypes and serves as the ground-truth relational pattern. For each node $i$, we then measure the consistency between its teacher-induced prototype distribution $\mathbf q_i^{b}$ and the relational pattern of its nearest prototype, given by $\mathbf Q_{m_i^\star}^{b}$, which serves as a canonical reference. This consistency is quantified via the KL divergence. The core intuition is that normal nodes should yield prototype distributions well aligned with $\mathbf Q_{m_i^\star}^{b}$—since their semantics are expected to follow the same global prototype structure—resulting in low KL divergence and thus large reliability weights, whereas unreliable nodes deviate from this global prototype structure and therefore receive reduced weights. The reliability weight $w_i^{b}$ is computed as:
\begin{equation}
\begin{aligned}
\tilde{w}_i^{b} 
&= \sigma\!\Big(-\beta \cdot \big(\mathrm{KL}(\mathbf q_i^{b}\,\|\,\mathbf Q_{m_i^\star}^{b}) - \mu\big)\Big), \\
w_i^{b} 
&= \frac{\tilde{w}_i^{b}}{\sum_{j=1}^{N}\tilde{w}_j^{b} + \epsilon},
\end{aligned}
\end{equation}
where $\sigma(\cdot)$ is the sigmoid function, $\beta$ controls the sharpness of reweighting, $\mu$ sets the pivot of the reliability threshold. $\beta$ and $\mu$ control the sensitivity of the reliability weight. $\mathbf{Q}_{m_i^\star}^{b}$ denotes the $m_i^\star$-th row of $\mathbf{Q}^b$ corresponding to the nearest prototype.
Given these adaptive weights, we define the discrepancy-aware commitment and refinement loss as
\begin{equation}
\begin{aligned}
\mathcal{L}_{\mathrm{DCR}}
= \sum_{i=1}^{|\mathcal V|} \sum_{b \in \mathcal{B}} w_i^{b}
\Big(
&\big\| \mathbf{z}_i^{t} - \mathrm{sg}\!\left[\mathcal{Z}_{b}(\mathbf{z}_i^{t})\right] \big\|_2^2 \\
&+ \big\| \mathrm{sg}\!\left[\mathbf{z}_i^{t}\right] - \mathcal{Z}_{b}(\mathbf{z}_i^{t}) \big\|_2^2
\Big),
\end{aligned}
\end{equation}
where $\operatorname{sg}\left [ \cdot  \right ]$ denotes the stop-gradient operation. The first term corresponds to the \emph{commitment loss}, which pulls teacher features toward their assigned prototypes, and the second term corresponds to the \emph{refinement loss}, which updates the prototypes so they better capture transferable semantic structure.

\noindent\textbf{Overall Objective} 
The overall training objective integrates prototype distillation, discrepancy-aware commitment, and refinement loss:  
\begin{equation}
\mathcal{L}
= \mathcal{L}_{\text{PSD}}
+ \lambda\,\mathcal{L}_{\text{DCR}},
\end{equation}
where $\lambda$ are trade-off hyperparameters.

\subsection{Generalist Anomaly Score Inference}
During inference, anomaly scores are derived without retraining on the target graph. We integrate two complementary signals: (i) the distillation deviation, which reflects the semantic mismatch between teacher soft labels and student predictions, and (ii) the geometric deviation, which captures geometric inconsistency between embeddings and their quantized prototypes. The final anomaly score is a weighted combination of both terms:
\begin{equation}
s_i = \sum_{b \in \mathcal{B}}
\left[
\mathrm{KL}\!\left(\mathbf q_i^b \,\Vert\, \mathbf s_i^b\right)
+ \lambda \Big(
\|\Delta_h\|_2^2 + \|\Delta_z\|_2^2
\Big)
\right],
\end{equation}
where $\Delta_h := \mathbf{h}_i^{b} - \mathcal{Z}_{b}(\mathbf{h}_i^{b})$, $\Delta_z := \mathbf{z}_i^{t} - \mathcal{Z}_{b}(\mathbf{z}_i^{t})$, the $\lambda$ controls the relative contribution of geometric deviation. Nodes with higher $s_i$ values are considered more anomalous.

\begin{table*}[t]
\caption {Anomaly detection performance on 11 datasets under the zero-shot setting (AUROC). 
\textcolor{firstcolor}{\textbf{\underline{$\mathbf{1^{st}}$}}} marks the best result, 
\textcolor{secondcolor}{\underline{$\mathbf{2^{nd}}$}} the runner-up, and 
\textcolor{thirdcolor}{\underline{$\mathbf{3^{rd}}$}}. 
\textit{OOM} denotes out-of-memory. }
\centering
\label{tab:main_auroc}
\renewcommand{\baselinestretch}{2.4}
\setlength{\tabcolsep}{2.4pt}
\resizebox{1\textwidth}{!}{
\begin{tabular}{l|ccccccccccc}
\toprule
Method
& Cora & CiteSeer & ACM & BlogCatalog & Facebook & Weibo
& Reddit & CS & Photo & Tolokers & T\text{-}Finance \\
\midrule

\rowcolor[HTML]{E9E9E9}
\multicolumn{12}{c}{Supervised - Pre-Train Only} \\
GCN
& $59.64{\scriptstyle\pm8.30}$ & $60.27{\scriptstyle\pm8.11}$ & $60.49{\scriptstyle\pm9.65}$ & $56.19{\scriptstyle\pm6.39}$ & $29.51{\scriptstyle\pm4.86}$ & $\third{76.64{\scriptstyle\pm17.69}}$
& $50.43{\scriptstyle\pm4.41}$ & $47.90{\scriptstyle\pm2.12}$ & $52.65{\scriptstyle\pm3.38}$ & $50.79{\scriptstyle\pm0.32}$ & $64.37{\scriptstyle\pm0.51}$ \\
GAT
& $50.06{\scriptstyle\pm2.65}$ & $51.59{\scriptstyle\pm3.49}$ & $48.79{\scriptstyle\pm2.73}$ & $50.40{\scriptstyle\pm2.80}$ & $51.88{\scriptstyle\pm2.16}$ & $53.06{\scriptstyle\pm7.48}$
& $51.78{\scriptstyle\pm4.04}$ & $49.02{\scriptstyle\pm0.53}$ & $51.71{\scriptstyle\pm1.00}$ & $\first{54.86{\scriptstyle\pm2.24}}$ & $\second{65.56{\scriptstyle\pm0.07}}$ \\
BGNN
& $42.45{\scriptstyle\pm11.57}$ & $42.32{\scriptstyle\pm11.82}$ & $44.00{\scriptstyle\pm13.69}$ & $47.67{\scriptstyle\pm8.52}$ & $54.74{\scriptstyle\pm25.29}$ & $32.75{\scriptstyle\pm35.35}$
& $50.27{\scriptstyle\pm3.84}$ & $52.35{\scriptstyle\pm4.63}$ & $49.88{\scriptstyle\pm1.13}$ & $51.53{\scriptstyle\pm4.29}$ & $50.78{\scriptstyle\pm5.46}$ \\
BWGNN
& $54.06{\scriptstyle\pm3.27}$ & $52.61{\scriptstyle\pm2.88}$ & $67.59{\scriptstyle\pm0.70}$ & $56.34{\scriptstyle\pm1.21}$ & $45.84{\scriptstyle\pm4.97}$ & $53.38{\scriptstyle\pm1.61}$
& $48.97{\scriptstyle\pm5.74}$ & $57.09{\scriptstyle\pm5.85}$ & $51.29{\scriptstyle\pm3.88}$ & $\third{53.82{\scriptstyle\pm2.58}}$ & $52.88{\scriptstyle\pm4.56}$ \\
GHRN
& $59.89{\scriptstyle\pm6.57}$ & $56.04{\scriptstyle\pm9.19}$ & $55.65{\scriptstyle\pm6.37}$ & $57.64{\scriptstyle\pm3.48}$ & $44.81{\scriptstyle\pm8.06}$ & $51.87{\scriptstyle\pm14.18}$
& $46.22{\scriptstyle\pm2.33}$ & $59.49{\scriptstyle\pm9.06}$ & $\third{54.24{\scriptstyle\pm8.95}}$ & $48.29{\scriptstyle\pm9.02}$ & $52.12{\scriptstyle\pm16.68}$ \\
UNPrompt
& $53.19{\scriptstyle\pm4.12}$ & $53.70{\scriptstyle\pm4.08}$ & $68.74{\scriptstyle\pm0.88}$ & $\third{68.87{\scriptstyle\pm0.56}}$ & $\third{61.37{\scriptstyle\pm4.54}}$ & $44.94{\scriptstyle\pm4.73}$
& $\second{57.10{\scriptstyle\pm1.21}}$ & $\second{71.64{\scriptstyle\pm0.86}}$ & $52.88{\scriptstyle\pm3.67}$ & $38.60{\scriptstyle\pm2.85}$ & $22.14{\scriptstyle\pm2.75}$ \\
AnomalyGFM
& $47.83{\scriptstyle\pm0.54}$ & $49.10{\scriptstyle\pm1.36}$ & $53.40{\scriptstyle\pm1.09}$ & $49.31{\scriptstyle\pm1.63}$ & $56.55{\scriptstyle\pm1.98}$ & $51.24{\scriptstyle\pm0.41}$
& $52.78{\scriptstyle\pm0.88}$ & $48.24{\scriptstyle\pm0.98}$ & $49.65{\scriptstyle\pm0.79}$ & $48.16{\scriptstyle\pm0.87}$ & $\third{64.44{\scriptstyle\pm7.12}}$ \\

\midrule
\rowcolor[HTML]{E9E9E9}
\multicolumn{12}{c}{Unsupervised - Pre-Train Only} \\
DOMINANT
& $\second{66.53{\scriptstyle\pm1.15}}$ & $\third{69.47{\scriptstyle\pm2.02}}$ & $\third{70.08{\scriptstyle\pm2.34}}$ & $\second{74.25{\scriptstyle\pm0.65}}$ & $51.01{\scriptstyle\pm0.78}$ & $\first{92.88{\scriptstyle\pm0.32}}$
& $50.05{\scriptstyle\pm4.92}$ & $60.61{\scriptstyle\pm0.11}$ & $47.39{\scriptstyle\pm0.34}$ & $48.12{\scriptstyle\pm1.65}$ & \textit{OOM} \\
CoLA
& $\third{63.29{\scriptstyle\pm8.88}}$ & $62.84{\scriptstyle\pm9.52}$ & $66.85{\scriptstyle\pm4.43}$ & $50.04{\scriptstyle\pm3.25}$ & $12.99{\scriptstyle\pm11.68}$ & $16.27{\scriptstyle\pm5.64}$
& $52.81{\scriptstyle\pm6.69}$ & $52.72{\scriptstyle\pm1.02}$ & $52.84{\scriptstyle\pm1.95}$ & $51.02{\scriptstyle\pm2.81}$ & $52.61{\scriptstyle\pm3.51}$ \\
HCM-A
& $54.28{\scriptstyle\pm4.73}$ & $48.12{\scriptstyle\pm6.80}$ & $53.70{\scriptstyle\pm4.64}$ & $55.31{\scriptstyle\pm0.57}$ & $35.44{\scriptstyle\pm13.97}$ & $65.52{\scriptstyle\pm12.58}$
& $48.79{\scriptstyle\pm2.75}$ & $62.91{\scriptstyle\pm5.86}$ & $50.62{\scriptstyle\pm1.40}$ & $\second{54.46{\scriptstyle\pm0.92}}$ & \textit{OOM} \\
TAM
& $62.02{\scriptstyle\pm2.39}$ & $\second{72.27{\scriptstyle\pm0.83}}$ & $\second{74.43{\scriptstyle\pm1.59}}$ & $49.86{\scriptstyle\pm0.73}$ & $\second{65.88{\scriptstyle\pm6.66}}$ & $71.54{\scriptstyle\pm0.18}$
& $\third{55.43{\scriptstyle\pm0.33}}$ & $\third{69.95{\scriptstyle\pm0.12}}$ & $\second{58.35{\scriptstyle\pm0.21}}$ & $50.51{\scriptstyle\pm0.14}$ & $56.16{\scriptstyle\pm4.54}$ \\

\midrule
\rowcolor[HTML]{E9E9E9}
\multicolumn{12}{c}{Ours} \\
ProMoS
& $\first{84.56{\scriptstyle\pm0.16}}$ & $\first{90.77{\scriptstyle\pm0.12}}$ & $\first{89.47{\scriptstyle\pm0.79}}$ & $\first{76.17{\scriptstyle\pm0.37}}$ & $\first{69.31{\scriptstyle\pm0.50}}$ & $\second{91.74{\scriptstyle\pm0.03}}$
& $\first{60.83{\scriptstyle\pm0.35}}$ & $\first{88.85{\scriptstyle\pm0.60}}$ & $\first{72.67{\scriptstyle\pm1.07}}$ & $52.80{\scriptstyle\pm0.82}$ & $\first{71.62{\scriptstyle\pm1.06}}$ \\
\bottomrule
\end{tabular}
}
\end{table*}

\begin{table*}[t]
\caption{Anomaly detection performance on 11 datasets under the zero-shot setting (AUPRC).}
\centering
\label{tab:main_auprc}
\renewcommand{\baselinestretch}{2.4}
\setlength{\tabcolsep}{2.4pt}

\resizebox{1\textwidth}{!}{
\begin{tabular}{l|ccccccccccc}
\toprule
Method
& Cora & CiteSeer & ACM & BlogCatalog & Facebook & Weibo
& Reddit & CS & Photo & Tolokers & T\text{-}Finance \\
\midrule

\rowcolor[HTML]{E9E9E9}
\multicolumn{12}{c}{Supervised - Pre-Train Only} \\
GCN
& $7.41{\scriptstyle\pm1.55}$ & $6.40{\scriptstyle\pm1.40}$ & $5.27{\scriptstyle\pm1.12}$ & $7.44{\scriptstyle\pm1.07}$ & $1.59{\scriptstyle\pm0.11}$ & $\third{67.21{\scriptstyle\pm15.20}}$
& $3.39{\scriptstyle\pm0.39}$ & $2.78{\scriptstyle\pm0.02}$ & $5.27{\scriptstyle\pm0.08}$ & $20.81{\scriptstyle\pm0.16}$ & $\third{9.82{\scriptstyle\pm0.18}}$ \\
GAT
& $6.49{\scriptstyle\pm0.84}$ & $5.58{\scriptstyle\pm0.62}$ & $4.70{\scriptstyle\pm0.75}$ & $12.81{\scriptstyle\pm2.08}$ & $3.14{\scriptstyle\pm0.37}$ & $33.34{\scriptstyle\pm9.80}$
& $3.73{\scriptstyle\pm0.54}$ & $2.97{\scriptstyle\pm0.09}$ & $5.62{\scriptstyle\pm0.17}$ & $\third{22.87{\scriptstyle\pm0.98}}$ & $\second{9.93{\scriptstyle\pm0.11}}$ \\
BGNN
& $4.90{\scriptstyle\pm1.27}$ & $3.91{\scriptstyle\pm1.01}$ & $3.48{\scriptstyle\pm1.33}$ & $5.73{\scriptstyle\pm1.47}$ & $3.81{\scriptstyle\pm2.12}$ & $30.26{\scriptstyle\pm29.98}$
& $3.52{\scriptstyle\pm0.50}$ & $4.19{\scriptstyle\pm0.89}$ & $5.96{\scriptstyle\pm0.18}$ & $\second{24.46{\scriptstyle\pm2.43}}$ & $4.57{\scriptstyle\pm0.59}$ \\
BWGNN
& $7.25{\scriptstyle\pm0.80}$ & $6.35{\scriptstyle\pm0.73}$ & $7.14{\scriptstyle\pm0.20}$ & $8.99{\scriptstyle\pm1.12}$ & $2.54{\scriptstyle\pm0.63}$ & $12.13{\scriptstyle\pm0.71}$
& $3.69{\scriptstyle\pm0.81}$ & $8.09{\scriptstyle\pm0.62}$ & $6.55{\scriptstyle\pm0.60}$ & $22.60{\scriptstyle\pm1.41}$ & $5.02{\scriptstyle\pm0.64}$ \\
GHRN
& $9.56{\scriptstyle\pm2.40}$ & $7.79{\scriptstyle\pm2.01}$ & $5.61{\scriptstyle\pm0.71}$ & $10.94{\scriptstyle\pm2.56}$ & $2.41{\scriptstyle\pm0.62}$ & $28.53{\scriptstyle\pm7.38}$
& $3.24{\scriptstyle\pm0.33}$ & $5.79{\scriptstyle\pm1.16}$ & $7.67{\scriptstyle\pm1.91}$ & $23.79{\scriptstyle\pm5.74}$ & $8.13{\scriptstyle\pm7.57}$ \\
UNPrompt
& $5.94{\scriptstyle\pm0.77}$ & $5.00{\scriptstyle\pm0.85}$ & $9.05{\scriptstyle\pm1.32}$ & $\third{23.93{\scriptstyle\pm4.47}}$ & $3.17{\scriptstyle\pm0.44}$ & $16.04{\scriptstyle\pm5.91}$
& $\second{4.12{\scriptstyle\pm0.22}}$ & $9.88{\scriptstyle\pm1.02}$ & $5.99{\scriptstyle\pm0.57}$ & $16.86{\scriptstyle\pm1.11}$ & $2.79{\scriptstyle\pm0.27}$ \\
AnomalyGFM
& $5.11{\scriptstyle\pm0.07}$ & $4.09{\scriptstyle\pm0.11}$ & $4.01{\scriptstyle\pm0.22}$ & $5.57{\scriptstyle\pm0.30}$ & $\third{6.02{\scriptstyle\pm0.10}}$ & $6.79{\scriptstyle\pm0.04}$
& $\third{3.99{\scriptstyle\pm0.11}}$ & $2.84{\scriptstyle\pm0.06}$ & $5.32{\scriptstyle\pm0.12}$ & $21.39{\scriptstyle\pm0.75}$ & $7.65{\scriptstyle\pm2.31}$ \\

\midrule
\rowcolor[HTML]{E9E9E9}
\multicolumn{12}{c}{Unsupervised - Pre-Train Only} \\
DOMINANT
& $\second{12.75{\scriptstyle\pm0.71}}$ & $\second{13.85{\scriptstyle\pm2.34}}$ & $\third{15.59{\scriptstyle\pm2.69}}$ & $\second{35.22{\scriptstyle\pm0.87}}$ & $2.95{\scriptstyle\pm0.06}$ & $\first{81.47{\scriptstyle\pm0.22}}$
& $3.49{\scriptstyle\pm0.44}$ & $15.48{\scriptstyle\pm0.46}$ & $\third{7.88{\scriptstyle\pm0.04}}$ & $20.51{\scriptstyle\pm0.25}$ & \textit{OOM} \\
CoLA
& $\third{11.41{\scriptstyle\pm3.51}}$ & $8.33{\scriptstyle\pm3.73}$ & $7.31{\scriptstyle\pm1.45}$ & $6.04{\scriptstyle\pm0.56}$ & $1.90{\scriptstyle\pm0.68}$ & $7.59{\scriptstyle\pm3.26}$
& $3.71{\scriptstyle\pm0.67}$ & $3.66{\scriptstyle\pm0.14}$ & $6.40{\scriptstyle\pm0.45}$ & $22.93{\scriptstyle\pm1.47}$ & $4.61{\scriptstyle\pm0.36}$ \\
HCM-A
& $5.78{\scriptstyle\pm0.76}$ & $4.18{\scriptstyle\pm0.75}$ & $4.01{\scriptstyle\pm0.61}$ & $6.89{\scriptstyle\pm0.34}$ & $2.08{\scriptstyle\pm0.60}$ & $21.91{\scriptstyle\pm11.78}$
& $3.18{\scriptstyle\pm0.23}$ & $\third{26.56{\scriptstyle\pm12.10}}$ & $6.73{\scriptstyle\pm0.61}$ & $20.56{\scriptstyle\pm1.15}$ & \textit{OOM} \\
TAM
& $11.18{\scriptstyle\pm0.75}$ & $\third{11.55{\scriptstyle\pm0.44}}$ & $\second{23.20{\scriptstyle\pm2.36}}$ & $10.57{\scriptstyle\pm1.17}$ & $\first{8.40{\scriptstyle\pm0.97}}$ & $16.46{\scriptstyle\pm0.09}$
& $3.94{\scriptstyle\pm0.13}$ & $\second{29.51{\scriptstyle\pm1.46}}$ & $\second{15.46{\scriptstyle\pm1.29}}$ & $23.32{\scriptstyle\pm0.08}$ & $5.91{\scriptstyle\pm0.67}$ \\

\midrule
\rowcolor[HTML]{E9E9E9}
\multicolumn{12}{c}{Ours} \\
ProMoS
& $\first{46.48{\scriptstyle\pm0.30}}$ & $\first{46.91{\scriptstyle\pm0.30}}$ & $\first{41.47{\scriptstyle\pm0.20}}$ & $\first{36.20{\scriptstyle\pm0.10}}$ & $\second{6.42{\scriptstyle\pm0.34}}$ & $\second{72.05{\scriptstyle\pm0.11}}$
& $\first{4.85{\scriptstyle\pm0.02}}$ & $\first{33.04{\scriptstyle\pm0.07}}$ & $\first{18.33{\scriptstyle\pm0.83}}$ & $\first{27.37{\scriptstyle\pm0.63}}$ & $\first{10.16{\scriptstyle\pm0.96}}$ \\
\bottomrule
\end{tabular}
}
\end{table*}

\section{Experiments} \label{sec:sec5}
\subsection{Experimental Setup}
\noindent\textbf{Datasets.} 
To evaluate the generalization of GAD models, we follow established protocols~\citep{dong2024universal,liu2024arc} by training all methods on a set of graphs and testing on a separate set of unseen graphs. Our evaluation spans 15 real-world graphs from diverse domains and scales, each containing either real or injected anomalies. Specifically, we use PubMed~\citep{sen2008collective}, Flickr~\citep{tang2009relational}, Questions~\citep{platonov2023critical}, and YelpChi~\citep{rayana2015collective} as training datasets, and assess generalization on unseen in-domain graphs (Cora, CiteSeer, ACM~\citep{tang2008arnetminer}, BlogCatalog~\citep{ding2019deep}, Facebook~\citep{xu2022contrastive}, Weibo~\citep{kumar2019predicting}, Reddit~\citep{kumar2019predicting}) and unseen out-of-domain graphs (CoAuthor CS~\citep{shchur2018pitfalls}, Amazon Photo~\citep{shchur2018pitfalls}, Tolokers~\citep{Tolokers}, T-Finance~\citep{bwgnn_tang2022rethinking}). Further details are provided in Appendix~\ref{sec:appendix_datasets}.

\noindent\textbf{Baselines.} 
We compare against 12 representative baselines across supervised and unsupervised paradigms. The supervised group includes two conventional GNNs (GCN~\citep{gcn_kipf2017semi}, GAT~\citep{gat_velivckovic2018graph}), three GAD-specific models (BGNN~\citep{bgnn_ivanov2021boost}, BWGNN~\citep{bwgnn_tang2022rethinking}, GHRN~\citep{ghrn_gao2023addressing}), as well as three recently proposed generalist GAD methods (ARC~\citep{liu2024arc}, UNPrompt~\citep{niu2024zero}, AnomalyGFM~\citep{qiao2025anomalygfm}). 
The unsupervised group covers four representative paradigms: the reconstruction-based method DOMINANT~\citep{ding2019deep}, the contrastive method CoLA~\citep{liu2021anomaly}, the hop-prediction method HCM-A~\citep{huang2022hop}, and the affinity-based method TAM~\citep{tam_qiao2024truncated}. \looseness=-1

\noindent\textbf{Implementation.} 
Following prior GAD protocols~\citep{ding2019deep,liu2021anomaly,qiao2025anomalygfm}, we report AUROC and AUPRC (mean$\pm$std over five runs with different seeds). To enable a fair assessment of generalist GAD, we focus on zero-shot inference: train on training graphs \(\mathcal{T}_{\text{train}}\) and evaluate on unseen test graphs \(\mathcal{T}_{\text{test}}\) with no support set. Comparisons to few-shot methods (e.g., ARC~\citep{liu2024arc}) are deferred to Appendix~\ref{sec:few-shot}. For feature parity, we apply the projection mapping of \citep{liu2024arc} to obtain 64-dimensional node features. Our pre-trained teacher GNN uses GCA~\citep{zhu2021graph}, implemented via the PyG-SSL Toolkit~\citep{zheng2024pyg}. All baselines are implemented via official code and tuned following their reported strategies. More details are provided in Appendix~\ref{sec:appendix_implement}.

\subsection{Generalist GAD Performance} 
We evaluate the generalist GAD performance across eleven datasets from diverse domains. Table~\ref{tab:main_auroc} and Table~\ref{tab:main_auprc} report the AUROC and AUPRC results compared with existing baselines. Several observations emerge.

First, supervised pre-training methods struggle in this setting. This highlights the inherent diversity of anomalies, and models that emphasize learning specific abnormal patterns from training graphs have difficulty generalizing to unseen anomaly types. Second, unsupervised GAD methods, such as TAM, perform more robustly by modeling normality to identify outliers, confirming the promise of this direction. However, they still suffer from substantial degradation in the generalist setting. For instance, CoLA achieves AUROC scores (\%) of 87.79 and 89.68 on Cora and CiteSeer, respectively, when training and testing on the same graph, yet loses over 24\% when generalized across domains.

Finally, ours consistently outperforms both supervised and unsupervised baselines while requiring only unsupervised pre-training. 
Across eleven datasets, ProMoS achieves the best AUROC on nine and the second-best on one, with an average improvement of 14.12\% over the strongest baseline, DOMINANT. Table~\ref{tab:main_auprc} further reports AUPRC, which is more informative under severe class imbalance. Consistent with the AUROC results in Table~\ref{tab:main_auroc}, ProMoS achieves the best performance on 9 of the 11 datasets and ranks second on the remaining two. ProMoS delivers substantial gains, outperforming the best competitor by 33.73\% on Cora.
These gains stem from leveraging advances in graph self-supervised learning and our tailored design mixture-of-students, prototype distillation, and discrepancy-aware objectives, which enable more faithful modeling of generalizable normal patterns for anomaly detection. Additional experimental results, such as visualization analyses, sensitivity studies, and student activation analysis, are reported in Appendix~\ref{sec:se}.

\subsection{Ablation Study}
To comprehensively assess the contribution of the core ideas in ProMoS, we construct three variants: (1) w/o PSD removes the prototype-driven soft-label distillation objective; (2) w/o DCR discards the discrepancy-aware commitment and refinement loss; (3) w/o SSL replaces the pre-trained teacher GNN with a randomly initialized two-layer GCN.
Table~\ref{tab:ablation} reports the results. We observe that eliminating any single module consistently harms performance. In particular, removing the prototype-driven distillation mechanism (w/o PSD) leads to a dramatic drop, with performance on most datasets barely above random guessing, highlighting the necessity of prototype-guided distillation. Similarly, excluding the constraint on teacher outputs (w/o DCR) or discarding the pre-trained teacher in favor of learning from scratch (w/o SSL) results in more than 5\% AUROC reduction. Experiments validate the critical role of our core idea. Further ablation studies are provided in the Appendix~\ref{sec:abl}.

\begin{figure*}[!t]
\centering
\subfloat[Efficiency]{\label{fig:eff}\includegraphics[width=0.5\linewidth]{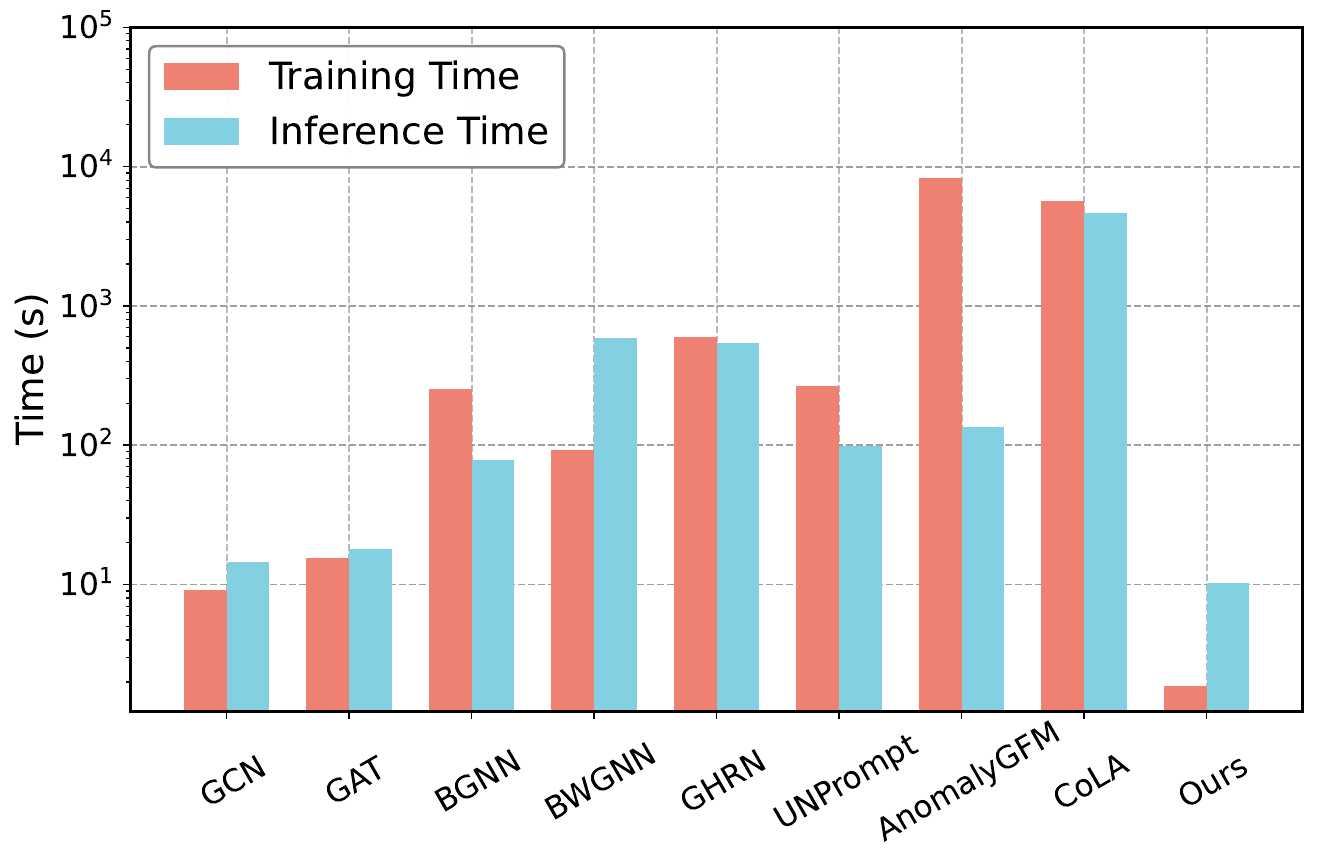}}
\subfloat[Scalability]{\label{fig:sca}\includegraphics[width=0.5\linewidth]{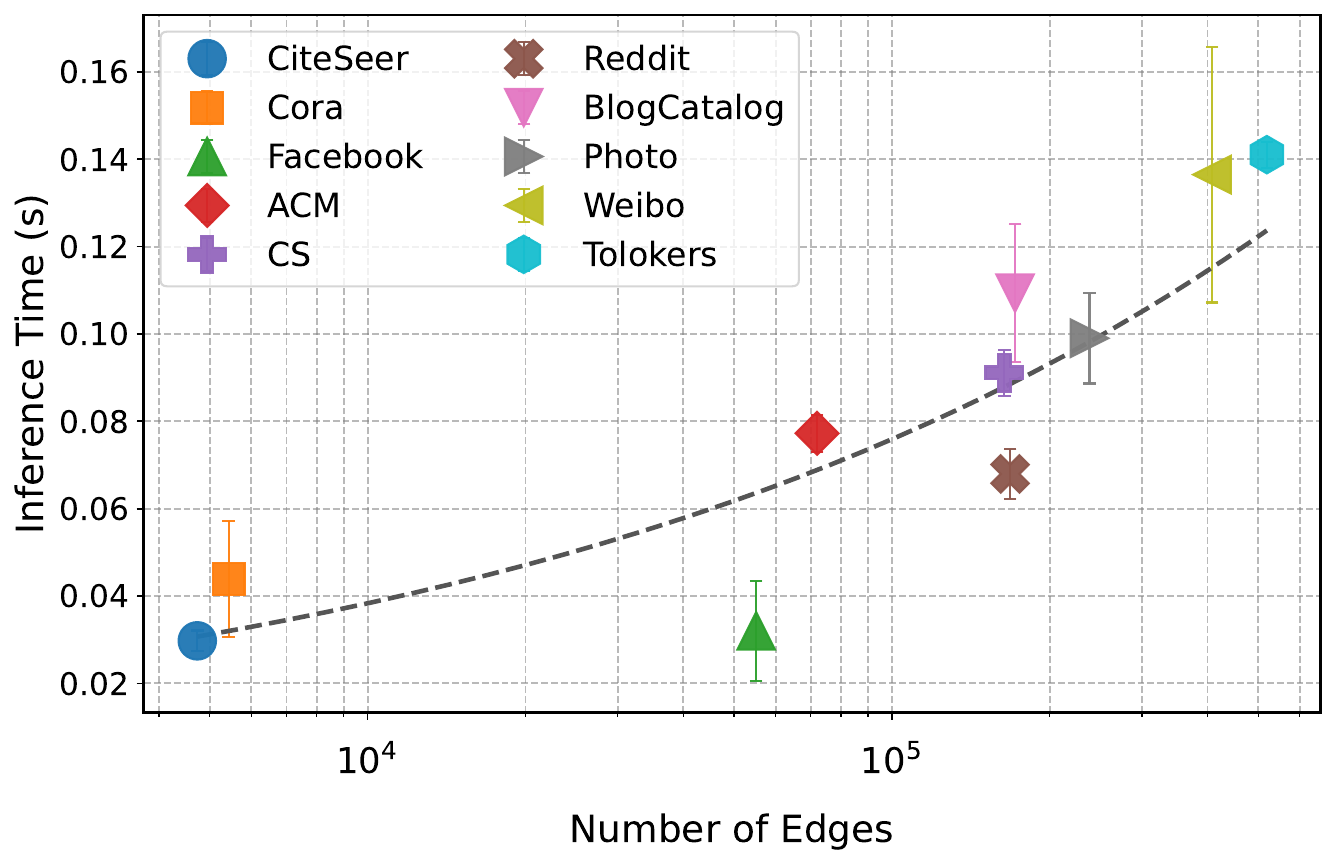}}
\caption{Efficiency and scalability. (a) Training and inference time (seconds, log-scale) across baselines; our method achieves the lowest overall cost. (b) Inference time as a function of edge count (log). The dashed curve shows a power-law fit $T\propto |\mathcal{E}|^{\alpha } $ with $ \alpha \approx 0.3$, indicating sub-linear growth ($ \alpha \approx 1$ would be near-linear).}
\label{fig:efficency}
\end{figure*}

\begin{table*}[t]
\caption{Ablation results w.r.t. AUC for ProMoS and its variants.}
\label{tab:ablation}
\centering
\setlength{\tabcolsep}{5pt}
\resizebox{0.9\textwidth}{!}{
\begin{tabular}{l|cccccc|c}
\toprule
Method & ACM & Facebook & Reddit & CS & Photo & T\text{-}Finance & Avg. \\
\midrule
\rowcolor[HTML]{E9E9E9} 
ProMoS & $\mathbf{89.47{\scriptstyle\pm0.79}}$ & $\mathbf{69.31{\scriptstyle\pm0.50}}$ & $\mathbf{60.83{\scriptstyle\pm0.35}}$ & $\mathbf{88.85{\scriptstyle\pm0.60}}$ & $\mathbf{72.67{\scriptstyle\pm1.07}}$ & $\mathbf{71.62{\scriptstyle\pm1.06}}$ & $\mathbf{75.46}$ \\
\midrule
w/o PSD & $69.25{\scriptstyle\pm0.42}$ & $26.68{\scriptstyle\pm1.09}$ & $56.72{\scriptstyle\pm1.95}$ & $65.75{\scriptstyle\pm0.34}$ & $55.91{\scriptstyle\pm1.03}$ & $63.58{\scriptstyle\pm0.66}$ & $56.32$ \\
w/o DCR & $88.11{\scriptstyle\pm0.93}$ & $65.86{\scriptstyle\pm1.05}$ & $56.14{\scriptstyle\pm1.02}$ & $87.23{\scriptstyle\pm0.65}$ & $58.80{\scriptstyle\pm5.80}$ & $65.05{\scriptstyle\pm3.41}$ & $70.20$ \\
w/o SSL & $76.60{\scriptstyle\pm0.69}$ & $68.93{\scriptstyle\pm0.35}$ & $53.83{\scriptstyle\pm1.82}$ & $79.27{\scriptstyle\pm0.34}$ & $71.01{\scriptstyle\pm1.14}$ & \textit{OOM} & $69.93$ \\
\bottomrule
\end{tabular}}
\end{table*}

\begin{table*}[!t]
\caption{ProMoS with different teacher backbones (AUROC).}
\label{tab:pretrain}
\centering
\setlength{\tabcolsep}{5pt}
\resizebox{0.9\textwidth}{!}{
\begin{tabular}{l|cccccc|c}
\toprule
Method & Cora & CiteSeer & ACM & BlogCatalog & Weibo & Tolokers & Avg. \\
\midrule
\rowcolor[HTML]{E9E9E9}
GCA 
& $84.56{\scriptstyle\pm0.16}$ 
& $90.77{\scriptstyle\pm0.12}$ 
& $\mathbf{89.47{\scriptstyle\pm0.79}}$ 
& $76.17{\scriptstyle\pm0.37}$ 
& $\mathbf{91.74{\scriptstyle\pm0.03}}$ 
& $52.80{\scriptstyle\pm0.82}$ 
& $\mathbf{80.92}$ \\
\midrule
GraphCL 
& $\mathbf{89.20{\scriptstyle\pm0.93}}$ 
& $\mathbf{93.42{\scriptstyle\pm0.27}}$ 
& $81.32{\scriptstyle\pm0.58}$ 
& $66.80{\scriptstyle\pm0.81}$ 
& $91.30{\scriptstyle\pm0.06}$ 
& $54.66{\scriptstyle\pm0.23}$ 
& $79.45$ \\
BGRL 
& $87.56{\scriptstyle\pm0.20}$ 
& $89.60{\scriptstyle\pm0.38}$ 
& $73.97{\scriptstyle\pm0.34}$ 
& $\mathbf{76.96{\scriptstyle\pm0.11}}$ 
& $91.55{\scriptstyle\pm0.03}$ 
& $\mathbf{56.69{\scriptstyle\pm0.25}}$ 
& $79.39$ \\
DGI 
& $84.62{\scriptstyle\pm0.02}$ 
& $90.90{\scriptstyle\pm0.02}$ 
& $84.97{\scriptstyle\pm0.77}$ 
& $76.63{\scriptstyle\pm0.21}$ 
& $91.66{\scriptstyle\pm0.02}$ 
& $51.30{\scriptstyle\pm1.44}$ 
& $80.01$ \\
GraphMAE
& $83.76{\scriptstyle\pm0.09}$
& $90.51{\scriptstyle\pm0.09}$ 
& $76.83{\scriptstyle\pm0.47}$
& $73.76{\scriptstyle\pm0.34}$ 
& $91.73{\scriptstyle\pm0.07}$
& $50.24{\scriptstyle\pm0.64}$
& $77.81$ \\
\bottomrule
\end{tabular}}
\end{table*}

\subsection{Efficiency and Scalability Analysis} \label{sec:efficiency}
\textbf{Efficiency.} First, we measure total training time on the training graphs \(\mathcal{T}_{\text{train}}\) and inference time on unseen test graphs \(\mathcal{T}_{\text{test}}\), with each epoch training efficiency and theoretical time complexity analysis provided in Appendix~\ref{sec:complexity}. As shown in Fig.~\ref{fig:eff}, GCN and GAT are the fastest baselines due to their minimalist architectures, while CoLA is extremely slow because it performs many random walks during both training and inference (e.g., 256 walks per node at inference). 
ProMoS attains the best overall efficiency, running 4.8× faster than GCN during training and 1.4× faster during inference.

\textbf{Scalability.} Second, we measure inference time on ten graphs (4,732-519,000 edges), reporting mean$\pm$std over five runs (x-axis in log-scale).
As shown in Fig.~\ref{fig:sca}, the results follow a power-law trend $T\propto |\mathcal{E}|^{\alpha }$ with $ \alpha \approx 0.3$. Since the slope is well below 1, inference grows \emph{sub-linearly} with graph size: a tenfold increase in edge size leads to only about a twofold increase in inference time.
This result demonstrates that ProMoS scales efficiently to large graphs.

\subsection{Robustness to Teacher Backbones}\label{sec:adapt}
To assess robustness to different teacher backbones, we evaluate the adaptability of ProMoS by comparing several well-established graph SSL methods spanning diverse paradigms. Specifically, we replace the teacher with GCA (default)~\citep{zhu2021graph}, GraphCL~\citep{you2020graph}, BGRL~\citep{thakoor2022large}, DGI~\citep{velivckovic2019deep}, and GraphMAE~\citep{hou2022graphmae}.
As reported in Table~\ref{tab:pretrain}, all variants of ProMoS achieve strong and stable performance across the six datasets. 
ProMoS with GCA, GraphCL, and BGRL each achieves the best performance on two datasets, while DGI consistently delivers the second-best results overall. 
The reconstruction-based GraphMAE also achieves competitive performance, indicating that the framework remains effective when paired with teachers from different learning paradigms.  These findings demonstrate the robustness of our framework to the choice of teacher and highlight its plug-and-play nature, allowing future substitution with more advanced graph SSL models as they emerge.

\begin{table*}[!t]
\caption{Comparison of optimization strategies for the commitment and refinement loss.}
\label{tab:cr_strategy_ablation}
\centering
\renewcommand{\baselinestretch}{1.2}
\setlength{\tabcolsep}{5pt}
\resizebox{0.9\textwidth}{!}{
\begin{tabular}{l|cccccc|c}
\toprule
Method & ACM & Facebook & Reddit & CS & Photo & T\text{-}Finance & Avg. \\
\midrule
\rowcolor[HTML]{E9E9E9}
ProMoS & $89.47{\scriptstyle\pm0.79}$ & $69.31{\scriptstyle\pm0.50}$ & $60.83{\scriptstyle\pm0.35}$ & $88.85{\scriptstyle\pm0.60}$ & $72.67{\scriptstyle\pm1.07}$ & $71.62{\scriptstyle\pm1.06}$ & $\mathbf{75.46}$ \\
\midrule
Alt-PT
& $89.67{\scriptstyle\pm1.00}$ & $69.19{\scriptstyle\pm0.57}$ & $60.73{\scriptstyle\pm0.66}$ & $88.95{\scriptstyle\pm1.08}$ & $72.80{\scriptstyle\pm1.36}$ & $70.58{\scriptstyle\pm0.65}$ & 75.32 \\
Alt-TP
& $89.86{\scriptstyle\pm0.65}$ & $68.95{\scriptstyle\pm0.50}$ & $60.62{\scriptstyle\pm0.98}$ & $89.16{\scriptstyle\pm0.82}$ & $71.19{\scriptstyle\pm1.69}$ & $70.39{\scriptstyle\pm1.15}$ & 75.03 \\
\bottomrule
\end{tabular}}
\end{table*}

\subsection{Optimization Strategy Analysis}
The commitment and refinement module serves two purposes: it regularizes the teacher’s feature space toward the prototype space and simultaneously refines the prototypes to capture transferable, high-level semantics. In this section, we evaluate the influence of optimizing these two components simultaneously on prototype distinctiveness by overly coupling their learning dynamics. To examine this, we compare the default \emph{joint} optimization strategy in ProMoS with two epoch-level \emph{alternating} strategies: (i) \textbf{Alt-PT}, which first updates the prototypes and then applies the commitment loss to regularize the teacher features; and (ii) \textbf{Alt-TP}, which first regularizes the teacher features and then updates the prototypes. 

Table~\ref{tab:cr_strategy_ablation} summarizes the results across six datasets. All three strategies yield highly consistent performance, and the joint optimization used in ProMoS achieves the best average AUC. These findings indicate that simultaneous optimization does not compromise prototype distinctiveness; instead, the joint strategy remains a stable and effective choice for aligning the teacher feature space with the evolving prototype semantics.

\section{Conclusion}
In this work, we introduced ProMoS, the first fully unsupervised generalist GAD framework that enables zero-shot detection on unseen graphs. ProMoS transfers rich normal patterns from a frozen self-supervised GNN teacher to a lightweight mixture-of-students via prototype-guided soft-label distillation in a learnable high-level semantic space. Our discrepancy-aware commitment and refinement mechanism further enhances generalization by stabilizing teacher outputs and refining semantic prototypes. 
Extensive experiments on eleven real-world graphs show that ProMoS achieves superior accuracy and sub-linearly scaling inference, with top AUROC on nine datasets and a 14.12\% average gain. Overall, this work lays the groundwork for scalable, label-free generalist graph anomaly detection.

\section*{Acknowledgements}
This research was partially supported by the Key Research and Development Project in Shaanxi Province No. 2024PT-ZCK-89, the National Natural Science Foundation of China Nos. 62406242, 62476215, 62302380, 62037001 and 62137002, and the Project of China Knowledge Centre for Engineering Science and Technology.

This work was also supported by National Key Research and Development Program of China (2023YFB3107400), the National Natural Science Foundation of China (62521002, U24B20185, U2441240, 62132011), Fundamental and Interdisciplinary Disciplines Breakthrough Plan of the Ministry of Education of China. Thanks to the New Cornerstone Science Foundation and the Xplorer Prize.

\section*{Impact Statement}
This paper presents a general, annotation-free graph anomaly detection method with zero-shot transferability, aiming to advance machine learning techniques for anomaly detection. The proposed approach may benefit high-risk applications such as financial risk control, social network analysis, and cybersecurity by reducing reliance on costly annotations and repeated training, thereby improving scalability and deployment efficiency. As with many learning-based methods, potential risks include biased predictions or false alarms due to data shift or distribution mismatch, which could lead to unfair decisions in sensitive settings. In practice, we recommend using the model’s outputs as auxiliary signals within human-in-the-loop and accountable decision-making processes to mitigate these risks.




\nocite{langley00}

\bibliography{icml_paper}
\bibliographystyle{icml2026}

\newpage
\appendix
\onecolumn

\section{Proof of Theorem 1}\label{sec:proof}
\begin{theorem}[Error reduction of MoS]
Consider a graph $\mathcal{G}=(\mathcal V,\mathcal E,\mathbf X)$. 
For any node $i \in \mathcal V$, let the Mixture-of-Students (MoS) architecture consist of $\{S_k\}_{k=1}^{N}$ student models with outputs $\widehat{f}(i)$.
Then, for any single student $k \in \{1,\ldots,N\}$, the MoS prediction achieves no larger expected prediction error than the single student:
\begin{equation}
\mathbb{E}\!\left[(y_i-\widehat{f}(i))^2 \,\middle|\, \mathcal{G}\right]
\;\le\;
\mathbb{E}\!\left[(y_i-f_k(i))^2 \,\middle|\, \mathcal{G}\right].
\end{equation}
\end{theorem}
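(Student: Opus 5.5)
The plan is to read $\widehat f(i)$ as the MoS output in Eq.~\eqref{eq:stu_rep}, that is, a gated combination $\widehat f(i)=\sum_{p=1}^{N} g_{i,p} f_p(i)$. Here the gating vector $\mathbf g_i=(g_{i,1},\dots,g_{i,N})$ lies in the probability simplex $\Delta_N$ after renormalizing the top-$K$ weights. The shared branch and the identity skip $\tilde{\mathbf x}_i$ are absorbed into every $f_p$ as a common additive term, so they cancel in any comparison between students. The target $y_i$ is treated as a random quantity given $\mathcal G$, with the randomness coming from the masks $\mathbf m_i$ and the sampling noise. All expectations are conditional on $\mathcal G$.

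The first step is the ambiguity (Jensen) decomposition. By convexity of $u\mapsto (y_i-u)^2$, I would show
\begin{equation*}
\mathbb E\big[(y_i-\widehat f(i))^2\mid\mathcal G\big]
=\sum_{p} g_{i,p}\,\mathbb E\big[(y_i-f_p(i))^2\mid\mathcal G\big]-\sum_p g_{i,p}\,\mathbb E\big[(f_p(i)-\widehat f(i))^2\mid\mathcal G\big].
\end{equation*}
This follows by expanding $y_i-f_p=(y_i-\widehat f)+(\widehat f-f_p)$ and using $\sum_p g_{i,p}(\widehat f-f_p)=0$. It proves that the MoS error is at most the gate-weighted average of the individual errors. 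This falls short of "any single $k$", because the average can exceed the best student's error.

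The second step closes that gap. It uses the fact that the gates are trained, so $\widehat f$ should be taken as the output with the risk-minimizing gate $\mathbf g_i^\star\in\arg\min_{\mathbf g\in\Delta_N}\mathbb E[(y_i-\sum_p g_p f_p(i))^2\mid\mathcal G]$. The routing family is rich enough to realize this, since the softmax with $\mathbf W_r$ approaches a vertex of the simplex and Top-$K$ with $K\ge 1$ keeps that vertex. In particular, every one-hot vector $\mathbf e_k$ is feasible, and choosing it reproduces $f_k$ exactly. Hence $\min_{\mathbf g}\mathbb E[(y_i-\widehat f_{\mathbf g}(i))^2\mid\mathcal G]\le \mathbb E[(y_i-f_k(i))^2\mid\mathcal G]$ for every $k$, which is the claim. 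The first step then strengthens this: it shows that any non-degenerate optimal gate strictly benefits by the diversity term whenever the active students disagree.

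The main obstacle is that the inequality does not hold for arbitrary fixed gates. I would therefore state explicitly the assumption that the gate is (near-)optimal, or at least that its weighted-average error does not exceed that of student $k$, and note that the one-hot vectors are admissible under the Top-$K$ routing. If the intended reading is instead the weaker "no worse than the average student", the first step alone suffices without any assumption on the gate.
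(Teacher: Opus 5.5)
Your argument is correct under the hypothesis you state, but it takes a different route from the paper. Like the paper, it needs an assumption the statement does not list, since the inequality is false for arbitrary students and gates.

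The paper writes $y_i=f_i^\star+\epsilon$ and splits both risks into bias, variance and noise. It assumes mean alignment, $\mathbb{E}[f_k(i)\mid\mathcal G]=\mu_i(\mathcal G)$, so the biases coincide. It then assumes equicorrelation with common variance $v$ and correlation $\rho$. This gives $\mathrm{Var}(\widehat f(i)\mid\mathcal G)=\mathbf g_i^\top\Sigma_{\mathcal G}\mathbf g_i=v\bigl(\rho+(1-\rho)\|\mathbf g_i\|_2^2\bigr)$, hence a gap of $-v(1-\rho)(1-\|\mathbf g_i\|_2^2)\le 0$ for every $\mathbf g_i\in\Delta^N$.

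Your ambiguity identity is the sharper tool. It holds pointwise and needs only $\mathbf g_i\in\Delta^N$ with $\mathcal G$-measurable gates. The paper's hypotheses imply your fallback condition: mean alignment plus equal variances make every individual risk equal to $(f_i^\star-\mu_i)^2+v+\sigma^2$. So your first step alone recovers the theorem without equicorrelation. Under the paper's hypotheses, your diversity term $\sum_p g_{i,p}\mathbb{E}[(f_p(i)-\widehat f(i))^2\mid\mathcal G]$ equals exactly the paper's $v(1-\rho)(1-\|\mathbf g_i\|_2^2)$. That is the version you should lead with.

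Your primary route through the risk-minimizing gate is the weak part, for three reasons:
\begin{itemize}
\item In ProMoS the router $\operatorname{softmax}(\mathbf W_r\tilde{\mathbf x}_i)$ is trained only through $\mathcal L_{\text{PSD}}+\lambda\mathcal L_{\text{DCR}}$ and never sees $y_i$.
\item A single shared $\mathbf W_r$ cannot in general realize a per-node optimum for all nodes at once.
\item With the default $K=2$, the renormalized softmax weights are never exactly one-hot.
\end{itemize}
So ``the gates are trained'' does not justify $\mathbf g_i=\mathbf g_i^\star$. Once this is posited as an oracle assumption, the conclusion is nearly tautological, since $\mathbf e_k$ is feasible.

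In exchange for its stronger assumptions on the students, the paper's route gives three things. The bound holds for whatever gate the unsupervised router outputs. The gain is explicit. The inequality is strict as soon as two students are active with $v>0$ and $\rho<1$.

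You are right to flag the renormalization. The paper's routing rule sets $g_{i,p}=\mathbf r_i[p]$ on the Top-$K$ set without renormalizing, so $\sum_p g_{i,p}<1$. The paper's proof silently assumes $\mathbf g_i\in\Delta^N$.
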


where $y_i$ denotes the ground-truth label of node $i$, $f_k(i)$ is the output of the $k$-th student.

\begin{proof}
To formalize the predictive mechanism of the Mixture-of-Students (MoS), we describe how multiple student models are combined through router weights to produce the final output. Specifically, the MoS prediction for node $i\in\mathcal V$ is defined as the weighted sum:
\begin{equation}
\label{eq:mos}
\widehat f(i)\;=\;\sum_{k=1}^{N} g_{i,k}\, f_k(i)\;=\;\mathbf g_i^\top \mathbf f(i),
\end{equation}
where $f_k(i)$ denotes the output of the $k$-th student $S_k$, 
$\mathbf g_i=(g_{i,1},\ldots,g_{i,N})^\top\in\Delta^N$ are the router weights with 
$\sum_{k=1}^N g_{i,k}=1$ and $g_{i,k}\ge0$, and 
$\mathbf f(i)=(f_1(i),\ldots,f_N(i))^\top$ stacks all individual student outputs.

To analyze the prediction error of MoS, we expand the expected squared loss and 
separate it into bias, variance, and noise. Each label is modeled as 
$y_i = f_i^\star + \epsilon$, where $f_i^\star$ represents the deterministic ground-truth component of node $i$, and $\epsilon$ is a stochastic residual 
typically modeled as Gaussian noise $\epsilon \sim \mathcal{N}(0,\sigma^2)$. 
Under this formulation, the decomposition proceeds as:
\begin{equation}
\begin{aligned}
\mathbb{E}\!\left[(y_i - \widehat f(i))^2 \right] 
&= \mathbb{E}\!\left[y_i^2 - 2y_i \widehat f(i) + \widehat f(i)^2 \mid \mathcal{G}\right] \\[4pt]
&= \mathbb{E}[y_i^2] - 2\mathbb{E}[y_i \widehat f(i)] + \mathbb{E}[\widehat f(i)^2] \\[4pt]
&= \mathbb{E}[(f_i^\star + \epsilon)^2] - 2\mathbb{E}[(f_i^\star+\epsilon)\widehat f(i)] + \mathbb{E}[\widehat f(i)^2] \\[4pt]
&= \mathbb{E}[(f_i^\star)^2] + 2\mathbb{E}[f_i^\star \epsilon] + \mathbb{E}[\epsilon^2] 
   - 2\mathbb{E}[f_i^\star \widehat f(i)] - 2\mathbb{E}[\epsilon \widehat f(i)] + \mathbb{E}[\widehat f(i)^2] \\[4pt]
&= (f_i^\star)^2 + \sigma^2 - 2f_i^\star\mathbb{E}[ \widehat f(i)] - 2\mathbb{E}[\epsilon]\mathbb{E}[\widehat f(i)] + \mathbb{E}[\widehat f(i)^2] \\[4pt]
&= (f_i^\star)^2 + \sigma^2 - 2 f_i^\star \,\mathbb{E}[\widehat f(i)] + \mathbb{E}[\widehat f(i)^2] \\[4pt]
&= (f_i^\star)^2 + \sigma^2 - 2 f_i^\star \,\mathbb{E}[\widehat f(i)] 
   + \mathrm{Var}(\widehat f(i)) + \big(\mathbb{E}[\widehat f(i)]\big)^2 \\[4pt]
&= \underbrace{\big(f_i^\star - \mathbb{E}[\widehat f(i)\mid\mathcal G]\big)^2}_{\text{Bias}} 
   + \underbrace{\mathrm{Var}(\widehat f(i)\mid\mathcal G)}_{\text{Variance}} 
   + \underbrace{\sigma^2(\mathcal G)}_{\text{Noise}} .
\end{aligned}
\end{equation}

In parallel, the expected squared loss of a single student $S_k$ can be decomposed in the same manner, yielding:
\begin{equation}
\mathbb{E}\!\left[(y_i - f_k(i))^2 \,\middle|\, \mathcal{G}\right]
= \underbrace{\big(f_i^\star - \mathbb{E}[f_k(i)\mid\mathcal G]\big)^2}_{\text{Bias}}
+ \underbrace{\mathrm{Var}(f_k(i)\mid\mathcal G)}_{\text{Variance}}
+ \underbrace{\sigma^2(\mathcal G)}_{\text{Noise}} .
\end{equation}

To establish a direct comparison between MoS and an individual student $S_k$, we consider the difference in their conditional prediction errors, $\mathbb{E}\!\left[(y_i-\widehat f(i))^2 \mid \mathcal{G}\right]
-\mathbb{E}\!\left[(y_i-f_k(i))^2 \mid \mathcal{G}\right]$. 
Substituting the bias–variance–noise decomposition into both terms gives:
\begin{equation}
\label{eq:gap-bvn}
\begin{aligned}
&\mathbb{E}\!\left[(y_i-\widehat f(i))^2 \,\middle|\, \mathcal{G}\right]
-\mathbb{E}\!\left[(y_i-f_k(i))^2 \,\middle|\, \mathcal{G}\right] \\
&\quad=\;
\underbrace{\Big(f_i^\star-\mathbb{E}[\widehat f(i)\mid\mathcal G]\Big)^2
-\Big(f_i^\star-\mathbb{E}[f_k(i)\mid\mathcal G]\Big)^2}_{\text{Bias difference}}
\;+\;
\underbrace{\mathrm{Var}(\widehat f(i)\mid\mathcal G)
-\mathrm{Var}(f_k(i)\mid\mathcal G)}_{\text{Variance difference}}.
\end{aligned}
\end{equation}

Then, we introduce a mild mean alignment hypothesis reflecting that all students are trained under the same target/teacher and differ only in stochasticity:
\begin{equation}\label{eq:mean-align}
\mathbb{E}\!\left[f_k(i)\mid \mathcal G\right] \;=\; \mu_i(\mathcal G)\qquad \text{for all } k\in\{1,\ldots,N\}.
\end{equation}
Combining Eq.~\ref{eq:mean-align} with the MoS predictor in Eq.~\ref{eq:mos} and the simplex constraint
$\mathbf 1^\top \mathbf g_i = 1$, we obtain
\begin{equation}\label{eq:mos-mean}
\mathbb{E}\!\left[\widehat f(i)\mid \mathcal G\right]
= \mathbb{E}\!\left[\mathbf g_i^\top \mathbf f(i)\mid \mathcal G\right]
= \mathbf g_i^\top \mathbb{E}\!\left[\mathbf f(i)\mid \mathcal G\right]
= \mathbf g_i^\top \big(\mu_i(\mathcal G)\mathbf 1\big)
= \mu_i(\mathcal G).
\end{equation}
Therefore the two bias terms in Eq.~\ref{eq:gap-bvn} coincide:
\[
\Big(f_i^\star-\mathbb{E}[\widehat f(i)\mid\mathcal G]\Big)^2
\;=\;
\Big(f_i^\star-\mathbb{E}[f_k(i)\mid\mathcal G]\Big)^2
\;=\;
\Big(f_i^\star-\mu_i(\mathcal G)\Big)^2,
\]
and the bias difference vanishes. Consequently, the expected prediction error simplifies to the:
\begin{equation}
\mathbb{E}\!\left[(y_i-\widehat f(i))^2 \mid \mathcal{G}\right]
-\mathbb{E}\!\left[(y_i-f_k(i))^2 \mid \mathcal{G}\right]
= \mathrm{Var}(\widehat f(i)\mid\mathcal G)-\mathrm{Var}(f_k(i)\mid\mathcal G).
\end{equation}


Having reduced the comparison to the variance gap, we now compute both terms explicitly.
Let $\mathbf m:=\mathbb{E}[\mathbf f(i)\mid\mathcal G]$, and 
$\Sigma_{\mathcal G}:=\mathrm{Cov}(\mathbf f(i)\mid\mathcal G)
=\mathbb{E}\!\left[(\mathbf f-\mathbf m)(\mathbf f-\mathbf m)^\top\mid\mathcal G\right]$.
Since the router weights $\mathbf g_i$ are $\mathcal G$–measurable (given $\mathcal G$ and node $i$), the conditional variance of the MoS predictor satisfies:
\begin{equation}
\label{eq:var-mos-steps}
\begin{aligned}
\mathrm{Var}(\widehat f(i)\mid\mathcal G)
&= \mathbb{E}\!\left[\big(\widehat f(i)-\mathbb{E}[\widehat f(i)\mid\mathcal G]\big)^2 \,\middle|\, \mathcal G\right] \\[2pt]
&= \mathbb{E}\!\left[\big(\mathbf g_i^\top\mathbf f - \mathbf g_i^\top\mathbf m\big)^2 \,\middle|\, \mathcal G\right]\\[2pt]
&= \mathbb{E}\!\left[\big(\mathbf g_i^\top(\mathbf f-\mathbf m)\big)^2 \,\middle|\, \mathcal G\right] \\[2pt]
&= \mathbb{E}\!\left[(\mathbf f-\mathbf m)^\top(\mathbf g_i\mathbf g_i^\top)(\mathbf f-\mathbf m) \,\middle|\, \mathcal G\right] \\[2pt]
&= \mathrm{tr}\!\left(\mathbf g_i\mathbf g_i^\top\ \mathbb{E}\!\left[(\mathbf f-\mathbf m)(\mathbf f-\mathbf m)^\top \,\middle|\, \mathcal G\right]\right) \\[2pt]
&= \mathrm{tr}\!\left(\mathbf g_i\mathbf g_i^\top \Sigma_{\mathcal G}\right)
= \mathbf g_i^\top \Sigma_{\mathcal G}\,\mathbf g_i.
\end{aligned}
\end{equation}

To further simplify the variance expression, we impose a standard 
\emph{equicorrelation} structure on the student outputs. Specifically, we assume 
that each student has the same conditional variance $v(\mathcal G)$ and any pair 
shares a common conditional correlation $\rho(\mathcal G)\in[-1,1]$. Formally,
\begin{equation}
\mathrm{Var}(f_k(i)\mid\mathcal G)=v(\mathcal G), 
\quad 
\mathrm{Corr}(f_k(i),f_r(i)\mid\mathcal G)=\rho(\mathcal G),\ \ \forall k\neq r.
\end{equation}
Under this assumption, the covariance matrix admits the closed form:
\begin{equation}
\label{eq:sigma-equi}
\Sigma_{\mathcal G}
= v(\mathcal G)\Big((1-\rho(\mathcal G))I+\rho(\mathcal G)\,\mathbf 1\mathbf 1^\top\Big),
\end{equation}
where $\mathbf 1$ is the all-ones vector in $\mathbb{R}^N$.

We next compute $\mathrm{Var}(\widehat f(i)\mid\mathcal G)$ under the Eq.~\ref{eq:var-mos-steps} that $\mathrm{Var}(\widehat f(i)\mid\mathcal G)=\mathbf g_i^\top \Sigma_{\mathcal G}\,\mathbf g_i$, and from Eq.~\ref{eq:sigma-equi} that
$\Sigma_{\mathcal G}=v(\mathcal G)\big((1-\rho(\mathcal G))I+\rho(\mathcal G)\mathbf 1\mathbf 1^\top\big)$. Substituting and simplifying yields:
\begin{equation}
\label{eq:var-mos-equi-deriv}
\begin{aligned}
\mathrm{Var}(\widehat f(i)\mid\mathcal G)
&= \mathbf g_i^\top \Big[v(\mathcal G)\big((1-\rho(\mathcal G))I+\rho(\mathcal G)\mathbf 1\mathbf 1^\top\big)\Big]\mathbf g_i \\[2pt]
&= v(\mathcal G)\Big((1-\rho(\mathcal G))\,\mathbf g_i^\top \mathbf g_i
\;+\;\rho(\mathcal G)\,\mathbf g_i^\top \mathbf 1\mathbf 1^\top \mathbf g_i\Big) \\[2pt]
&= v(\mathcal G)\Big((1-\rho(\mathcal G))\,\|\mathbf g_i\|_2^2
\;+\;\rho(\mathcal G)\,(\mathbf 1^\top \mathbf g_i)^2\Big) \\[2pt]
&= v(\mathcal G)\Big(\rho(\mathcal G)+(1-\rho(\mathcal G))\|\mathbf g_i\|_2^2\Big).
\end{aligned}
\end{equation}

Therefore, combining the decompositions, we have:
\begin{equation}\label{eq:err-gap-final}
\begin{aligned}
\mathbb{E}\!\left[(y_i-\widehat f(i))^2 \mid \mathcal G\right]
-\mathbb{E}\!\left[(y_i-f_k(i))^2 \mid \mathcal G\right]
&= \mathrm{Var}(\widehat f(i)\mid\mathcal G)-\mathrm{Var}(f_k(i)\mid\mathcal G) \\[2pt]
&= v(\mathcal G)\Big(\rho(\mathcal G)+(1-\rho(\mathcal G))\|\mathbf g_i\|_2^2\Big)-v(\mathcal G) \\[2pt]
&= -\,v(\mathcal G)\big(1-\rho(\mathcal G)\big)\big(1-\|\mathbf g_i\|_2^2\big)\;\le\;0 .
\end{aligned}
\end{equation}
\textit{where} $v(\mathcal G)\!\ge\!0$, $\rho(\mathcal G)\!\le\!1$, and 
$\|\mathbf g_i\|_2^2\!\le\!1$ for any $\mathbf g_i\!\in\!\Delta^N$ (equality iff $\mathbf g_i$ is one–hot).

In particular, if the router activates at least two students, then $\|\mathbf g_i\|_2^2<1$ and the inequality is strict. 
This completes the proof. \hfill\qedhere

\end{proof}

\begin{algorithm}[!t]
\caption{ProMoS: Prototype-Guided Mixture-of-Students for Generalist GAD}
\label{alg:promos}
\begin{algorithmic}[1]
\REQUIRE Training graphs $\mathcal{T}_{\text{train}}$; frozen teacher $f_T$; adapter $g_\phi$; shared student $S_g$; personalized students $\{S_p^{\ell}\}_{p=1}^N$; prototype codebooks $\{\mathbf{P}_b\}_{b\in\{g,\ell\}}$; epochs $E$; temperature $\tau$; Top-$K$ router.
\ENSURE Zero-shot anomaly scoring function $s:V\!\to\!\mathbb{R}$.

\STATE \textbf{Stage A: Initialization}
\STATE Initialize learnable parameters $\theta_g$, $\{\theta_p^{\ell}\}_{p=1}^N$, router $\mathbf{W}_r$, adapter $\phi$
\STATE Initialize prototypes $\{\mathbf{P}_b\}$ by clustering node embeddings (e.g., k-means)

\STATE \textbf{Stage B: Training (teacher frozen)}
\FOR{$e = 1,\dots,E$}
  \FOR{each $\mathcal{D}^{(i)}\in \mathcal{T}_{\text{train}}$}
    \STATE Extract node features and adjacency $(\mathbf{X}^{(i)},\mathbf{A}^{(i)})$
    \STATE \textit{// Teacher forward (stop-grad) and calibration}
    \STATE $\mathbf{U}_T \leftarrow f_T(\mathbf{X}^{(i)},\mathbf{A}^{(i)})$; $\mathbf{Z}_T \leftarrow g_\phi(\mathbf{U}_T)$

    \STATE \textit{// Shared branch}
    \STATE $\mathbf{h}^{g} \leftarrow S_g(\tilde{\mathbf{X}}_i^{(i)};\theta_g)$

    \STATE \textit{// Personalized branch with sparse Top-$K$ routing}
    \STATE $\mathbf{r} \leftarrow \mathrm{softmax}(\mathbf{W}_r\, \tilde{\mathbf{X}}_i^{(i)})$; $g_{i,p} \leftarrow \text{Top-}K(\mathbf{r}_i)$; $\mathbf{h}^{\ell} \leftarrow \sum_{p=1}^{N} g_{i,p}\, S_p^{\ell}(\tilde{\mathbf{X}}_i^{(i)};\theta_p^{\ell})$

    \STATE \textit{// Prototype-guided soft-label distillation (PSD)}
    \FOR{each $b\in\{g,\ell\}$}
      \STATE $\mathbf{q}_i^{\,b} \propto \exp(\mathrm{sim}(\mathbf{z}_i^{t},\mathbf{P}_b)/\tau)$; $\mathbf{s}_i^{\,b} \propto \exp(\mathrm{sim}(\mathbf{h}_i^{\,b},\mathbf{P}_b)/\tau)$
    \ENDFOR
    \STATE $\mathcal{L}_{\mathrm{PSD}} \leftarrow \frac{1}{|V|}\sum_i\sum_{b}\mathrm{KL}\!\big(\mathbf{q}_i^{\,b}\,\|\,\mathbf{s}_i^{\,b}\big)$

    \STATE \textit{// Discrepancy-aware commitment \& refinement (DCR)}
    \FOR{each $b\in\{g,\ell\}$}
      \STATE $\mathcal{Z}_b(\mathbf{z}_i^{t}) \leftarrow \arg\min_{\mathbf{p}_m\in \mathbf{P}^b}\|\mathbf{z}_i^{t}-\mathbf{p}_m^b\|_2^2$
      \STATE Compute reliability $w_i^b$ from prototype relations $\mathbf{Q}_b$
    \ENDFOR
    \STATE $\mathcal{L}_{\mathrm{DCR}} \leftarrow \sum_i\sum_b w_i^b\!\left(\|\mathbf{z}_i^{t}-\mathrm{sg}[\mathcal{Z}_b(\mathbf{z}_i^{t})]\|_2^2
      + \|\mathrm{sg}[\mathbf{z}_i^{t}]-\mathcal{Z}_b(\mathbf{z}_i^{t})\|_2^2\right)$

    \STATE \textit{// Overall objective and updates (teacher frozen)}
    \STATE $\mathcal{L} \leftarrow \mathcal{L}_{\mathrm{PSD}} + \lambda\,\mathcal{L}_{\mathrm{DCR}}$
    \STATE Update $S_g$, $\{S_p^{\ell}\}_{p=1}^N$, $\mathbf{W}_r$, $\phi$, and $\{\mathbf{P}_b\}$ by gradient descent on $\mathcal{L}$
  \ENDFOR
\ENDFOR

\STATE \textbf{Stage C: Zero-shot inference on unseen graph $\mathcal{G}=(\mathcal V,\mathcal E,\mathbf X)$ (no retraining)}
\STATE Compute $\mathbf{Z}_T$, $\mathbf{h}^{g}$, $\mathbf{h}^{\ell}$, soft labels $\{\mathbf{q}^b\}$, predictions $\{\mathbf{s}^b\}$, and quantizers $\mathcal{Z}_b(\cdot)$ as above
\FOR{each node $v_i\in V$}
  \STATE $s_i \leftarrow \sum_{b\in\{g,\ell\}}
  \Big[
    \mathrm{KL}\!\big(\mathbf{q}_i^{\,b}\,\|\,\mathbf{s}_i^{\,b}\big)
    + \lambda\big(\|\mathbf{h}_i^{\,b}-\mathcal{Z}_b(\mathbf{h}_i^{\,b})\|_2^2
    + \|\mathbf{z}_i^{t} - \mathcal{Z}_b(\mathbf{z}_i^{t})\|_2^2\big)
  \Big]$
\ENDFOR
\STATE \textbf{return} $s(\cdot)$
\end{algorithmic}
\end{algorithm}

\section{Algorithms}
\label{sec:algorithms}

Algorithm~\ref{alg:promos} outlines ProMoS, prototype-guided mixture-of-students framework for generalist GAD. 

\paragraph{Initialization.}
We freeze the pretrained teacher $f_T$, introduce an adapter $g_\phi$, and initialize a shared student $S_g$, personalized students ${S_p^{\ell}}_{p=1}^N$, router $\mathbf{W}_r$, and prototypes ${\mathbf{P}_b}$ from clustered node embeddings. \looseness=-1

\paragraph{Training.}
Given training graphs, the teacher produces calibrated features. The shared branch models global patterns, while the personalized branch employs sparse Top-$K$ routing for efficiency. Students are guided by prototype soft-label distillation, enforcing alignment with teacher semantics, and by discrepancy-aware commitment and refinement, which stabilize the teacher's output and update prototypes using sample reliability weighting. The total loss combines these two objectives.

\paragraph{Inference.}
On unseen graphs, no retraining is required. We compute teacher and student features, project them into prototype space, and score anomalies by combining distillation bias with geometric deviation. This enables zero-shot detection across diverse graphs.

\section{Time Complexity Analysis}\label{sec:complexity}
\textbf{Theoretical Analysis.} 
In this section, we analyze the time complexity of ProMoS by dividing it into the encoder and the loss functions. Since the teacher model is pre-computed offline, its cost is negligible and omitted from the analysis. For the encoder, the main components include the adapter, the router, and the student models. The adapter projects each node representation into the student space with complexity $\mathcal{O}(n d^2)$, where $n=|\mathcal{V}|$ is the number of nodes and $d$ is the embedding dimension. The router computes gating logits over $N$ students and performs a Top-$K$ selection, leading to $\mathcal{O}(n d N)$. For student forward propagation, one shared student and $K$ activated personalized students are applied to each node, giving $\mathcal{O}(n (K{+}1) d^2)$. 
For the loss functions, the prototype-driven soft-label distillation requires computing similarities between each node and $M$ prototypes, which takes $\mathcal{O}(n d M)$, where $M$ is the number of prototypes. The commitment loss and refinement loss introduce an additional residual computation with $\mathcal{O}(n d)$ and prototype–prototype similarity construction with $\mathcal{O}(dM^2)$. Finally, router regularization terms such as load-balancing and z-loss incur $\mathcal{O}(n N)$. Therefore, the overall training complexity of ProMoS is $\mathcal{O}\big(nKd^2 + ndN + ndM + dM^2\big)$.

Table~\ref{tab:complexity_comparison} summarizes the complexity comparison with representative generalist GAD methods. In practice, the number of edges is usually much larger than the number of nodes, i.e., $m \gg n \gg \{K,N,M,K'\}$. Hence, the edge- and node-related terms dominate the overall cost, while contributions from the number of activated students $K$ (typically fixed to $K=2$), the number of students $N$, and the number of prototypes $M$ can be regarded as negligible constants. Consequently, the complexity of ProMoS remains comparable to existing generalist GAD methods, while providing enhanced scalability and flexibility through its mixture-of-students design.

\textbf{Empirical Analysis.} As shown in Figure~\ref{fig:appendix_time}, in addition to reporting the overall training and inference time, we also provide the per-epoch training time and include comparisons with DOMINANT and TAM. Notably, DOMINANT runs out of memory on T-Finance, and thus its inference time excludes this dataset. Reconstruction-based methods, such as DOMINANT, require rebuilding the full adjacency matrix, making them infeasible for large graphs. TAM incurs even higher costs, as it relies on multi-round graph truncation and multi-network training, leading to substantial complexity in both training and inference. In contrast, ProMoS achieves remarkably low per-epoch training cost, second only to GCN and GAT, while delivering superior inference efficiency: it is 1.4× faster than GCN, the fastest baseline, and improves over existing GAD methods by orders of magnitude. These results further highlight ProMoS as an efficient and scalable solution for generalist GAD.

We further investigate the impact of different MoS architectures on both efficiency and performance. The \textit{All} variant activates all student branches simultaneously, while the \textit{Single} variant replaces the personalized branch with a single student of equivalent parameter count (achieved by widening its hidden layers). As shown in Figure~\ref{fig:appendix_mos}, the sparsely-activated MoS achieves the best AUROC while maintaining the lowest training and inference cost. This demonstrates that dynamic sparsification enables more efficient parameter utilization, striking a favorable balance between expressiveness and efficiency.

\begin{table}[t]
\caption{Comparison of the complexity of generalist GAD methods. 
Here, $n=|\mathcal{V}|$ is the number of nodes, $m=|\mathcal{E}|$ is the number of edges, $d$ is the feature dimension, $N$ is the number of students, $K$ is the number of activated students, $M$ is the number of prototypes, $n_q$ is the number of query nodes and $n_k$ is the number of context nodes in ARC, and $K'$ is size of each graph prompt in UNPrompt.}
\centering
\small
\begin{tabular}{lc}
\toprule
\textbf{Method} & \textbf{Complexity} \\\midrule
ARC~\citep{liu2024arc} & $\mathcal{O}\big(nd^2 + ed + n_qn_kd + d \log d\big)$ \\
UNPrompt~\citep{niu2024zero} & $\mathcal{O}\big(nd^2 + ed + K'nd\big)$ \\
AnomalyGFM~\citep{qiao2025anomalygfm} & $\mathcal{O}\big(md^2 + n^2d + nd + d^2\big)$ \\\midrule
\textbf{ProMoS (Ours)} & $\mathcal{O}\big(nKd^2 + ndN + ndM + dM^2\big)$  \\
\bottomrule
\end{tabular}
\label{tab:complexity_comparison}
\end{table}

\begin{figure*}[t]
  \centering
  \begin{minipage}{0.695\linewidth}
    \centering
    \includegraphics[width=\linewidth]{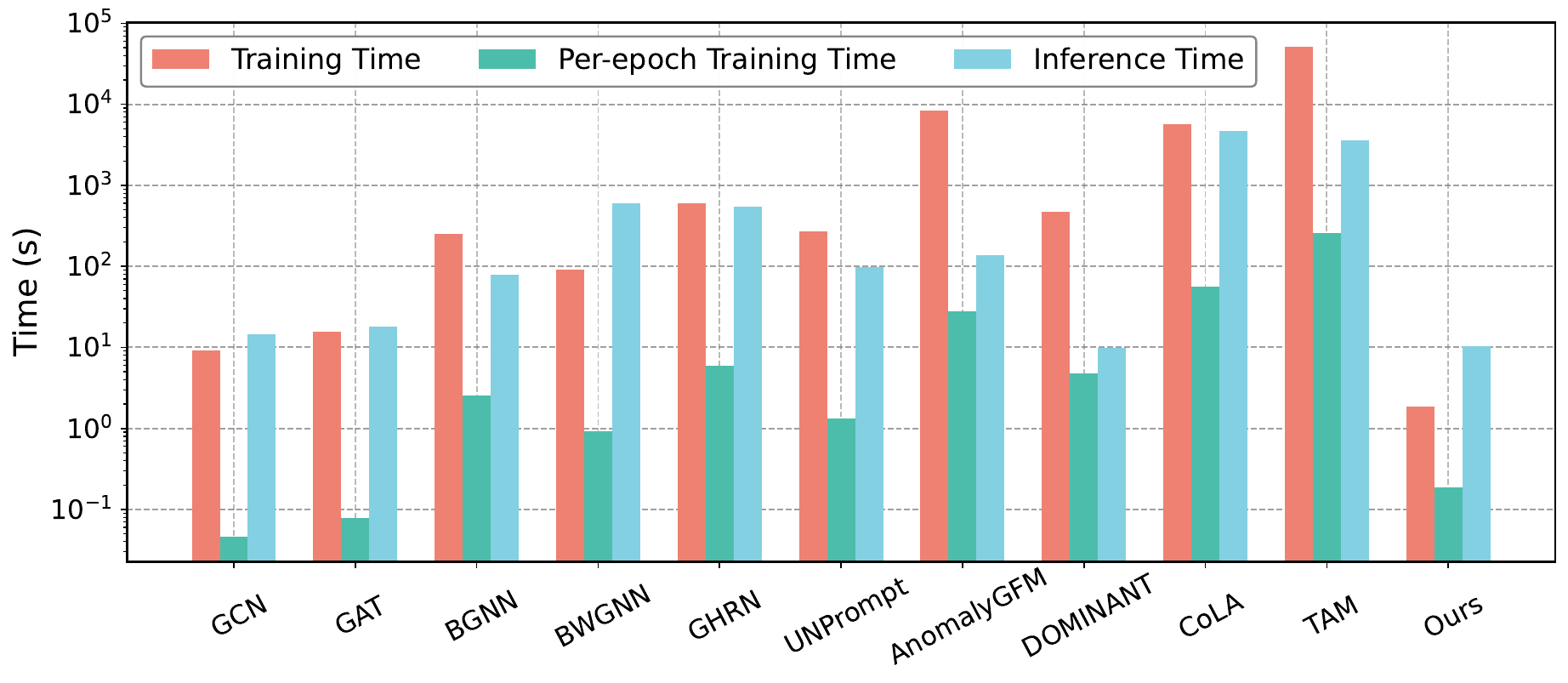}
    \caption{Time comparison with baseline.}
    \label{fig:appendix_time}
  \end{minipage}
  \hfill
  \begin{minipage}{0.295\linewidth}
    \centering
    \includegraphics[width=\linewidth]{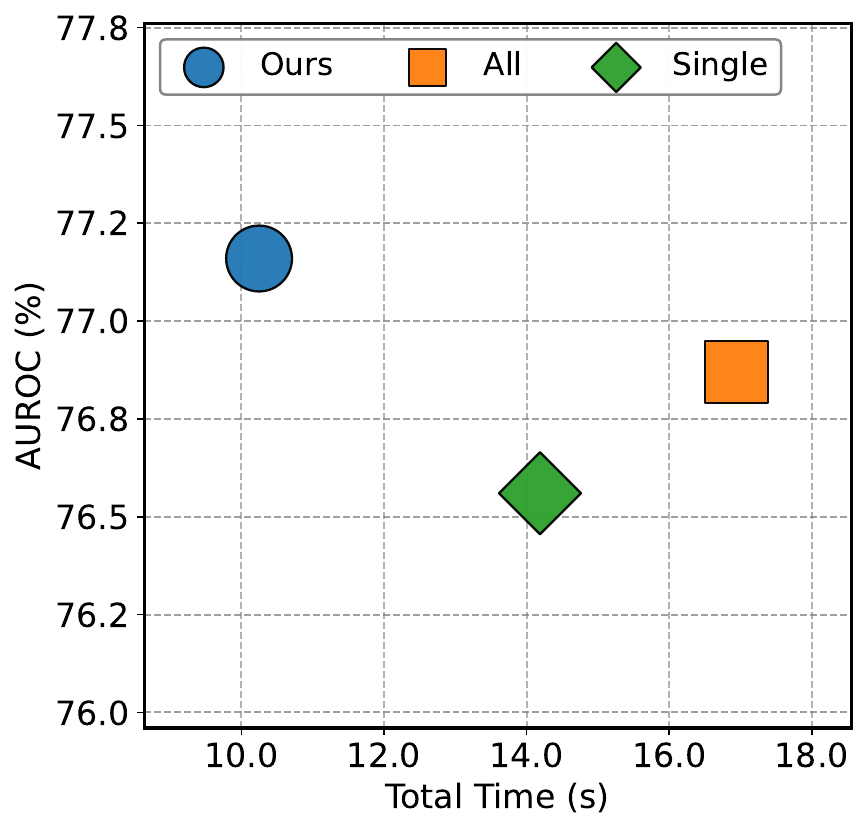}
    \caption{Performance comparison with MoS variants.}
    \label{fig:appendix_mos}
  \end{minipage}
\end{figure*}

\section{More Experimental Setup}
\subsection{Datasets details}\label{sec:appendix_datasets}
We evaluate on 15 benchmark datasets spanning diverse domains, as summarized in Table~\ref{tab:dsets_statis}. To ensure broad coverage, the datasets are grouped into 8 categories: citation networks with injected anomalies (Cora, CiteSeer, ACM, PubMed), social networks with injected anomalies (BlogCatalog, Flickr), social networks with real anomalies (Facebook, Weibo, Reddit, Questions), co-review networks with real anomalies (YelpChi), co-author with injected anomalies (CoAuthor CS) and co-purchase networks with injected anomalies (Amazon Photo), crowd-sourcing service network with real anomalies (Tolokers) and finance networks with real anomalies (T-Finance).

Follow established protocols~\citep{dong2024universal,liu2024arc}, in the first four categories that contain two datasets each, we select the larger graph (PubMed, Flickr, Questions, and YelpChi) as the training dataset. The remaining 11 graphs, including the smaller counterparts (Cora, CiteSeer, ACM, BlogCatalog, Facebook, Weibo, Reddit) and the four single-domain datasets (CoAuthor CS, Amazon Photo, Tolokers, T-Finance), are used for evaluation. This design enables us to examine both in-domain generalization, where the model is tested on datasets from the same domains as the training graph, and out-of-domain generalization, where it is tested on datasets from other domains. This diversity provides a comprehensive testbed to evaluate the robustness and adaptability of to unseen graphs. Specifically, the detailed descriptions for the datasets are given as follows:

\begin{itemize}
    \item \textbf{Cora}, \textbf{CiteSeer}, \textbf{ACM}~\citep{tang2008arnetminer} and \textbf{PubMed}~\citep{sen2008collective} are four citation networks, where nodes denote scientific publications and edges capture citation relationships between them. Each publication is described by a bag-of-words feature vector, with the vocabulary size determining the attribute dimension.
    \item \textbf{BlogCatalog} and \textbf{Flickr}~\citep{ding2019deep,tang2009relational} are representative social blog directories, where nodes correspond to users with inter-node links symbolizing mutual following. The node features are constructed from personalized textual content generated by users, such as blog posts or tagged images, reflecting individual interests and activities.
    \item \textbf{YelpChi}~\citep{rayana2015collective,mcauley2013amateurs} is dataset about the relationship between users and reviews. YelpChi aims to identify anomalous reviews on Yelp.com that unfairly promote or demote products or businesses. Based on~\citep{mukherjee2013yelp,rayana2015collective}, three different graph datasets were derived from Yelp using different connections in user, product review text, and time. In this study, we focus on YelpChi-RUR (reviews posted by the same user).
    \item \textbf{Facebook}~\citep{xu2022contrastive}, \textbf{Weibo}~\citep{kumar2019predicting}, \textbf{Reddit}~\citep{kumar2019predicting} and \textbf{Questions}~\citep{platonov2023critical} are four social networks with real anomalies. Facebook is a social network in which users can build relationships with others and share with their friends. The Weibo dataset encompasses a graph of users and their associated hashtags from the Tencent Weibo platform. Suspicious behavior is defined by users posting multiple consecutive posts within a short temporal window (e.g., 60 seconds), with those exhibiting at least five such bursts labeled as anomalies. Node features combine geolocation information of the microblog post with bag-of-words features. Reddit serves as a forum for posts sourced from the social media platform Reddit, where users labeled as banned are identified as anomalies. Textual content from posts is encoded as vectors to serve as node attributes. Questions~\citep{platonov2023critical} is constructed from Yandex Q, an online question-answering platform. Nodes correspond to users, and edges indicate whether a question–answer interaction occurred within a one-year period. Each user is represented by the averaged FastText embedding of their profile description, augmented with a binary feature flagging missing descriptions.
    \item \textbf{CoAuthor CS}~\citep{shchur2018pitfalls} are co-authorship graphs derived from the Microsoft Academic Graph (KDD Cup 2016). Nodes represent authors, and edges connect pairs of authors who have co-authored a paper.  Node features represent paper keywords for each author’s papers, and class labels indicate the most active fields of study for each author.
    \item \textbf{Amazon Photo}~\citep{shchur2018pitfalls} is a subgraph of the Amazon co-purchase network~\citep{mcauley2015image}, where nodes correspond to products and edges indicate frequent co-purchases. Each product is described by bag-of-words features extracted from reviews, with labels derived from product categories.
    \item \textbf{Tolokers}~\citep{Tolokers} is constructed from the Toloka crowdsourcing platform. Nodes correspond to workers who participated in at least one of 13 selected projects, and edges link pairs of workers who completed the same task. Each node is described by profile attributes and task performance statistics, while labels indicate whether a worker was banned for anomalous behavior.
    \item \textbf{T-Finance}~\citep{bwgnn_tang2022rethinking} is a large-scale transaction network where nodes represent anonymized accounts characterized by ten features capturing registration days, logging activities, and interaction frequency. Edges connect pairs of accounts with transaction records. Human experts annotate nodes as anomalies if they fall into categories like fraud, money laundering, and online gambling. 
\end{itemize}

\noindent\textbf{Anomaly Injection.} To evaluate the effectiveness of our method in detecting diverse types of anomalies, we follow established practices~\citep{song2007conditional,ding2019deep,liu2021anomaly} by injecting an equal number of attributive and structural anomalies into each of the six datasets (ACM, PubMed, BlogCatalog, Flickr, CoAuthor CS, and Amazon Photo). Specifically, for structural anomalies, in a small clique, a small set of nodes are much more closely linked to each other than average, aligning with typical structural abnormalities observed in real-world networks~\citep{skillicorn2007detecting}. To simulate such anomalies, we commence by defining the clique size $p$ and the number of cliques $q$. When generating a clique,  $p$ nodes are randomly selected from the node set $\mathcal{V}$ and fully connected, thus marking all selected nodes $p$ as structural anomaly nodes. This process is iterated $q$ times to generate $q$ cliques, resulting in a total injection of  $p \times q$ structural anomalies. In particular, we fix $p=15$ and $q=5, 5, 20, 20, 10, 15, 20, 15$ on Cora, CiteSeer, ACM, PubMed, BlogCatalog, Flickr, CoAuthor CS, and Amazon Photo, respectively. For feature anomalies, inconsistencies between node features and their neighboring contexts represent another prevalent anomaly observed in real-world scenarios. Following the pattern introduced by~\citep{song2007conditional}, feature anomalies are created by perturbing node attributes. When generating a single feature anomaly, we first select a target node $v_i$ and then sample a set of $k$ nodes as candidates. Subsequently, from the candidate set, we choose the node $v_j$ with the maximum Euclidean distance from the feature of the target node $v_i$ , and replace $v_i$'s feature with that of $v_j$. Here, we set $k=50$ to ensure a sufficiently large perturbation magnitude. To ensure an equal balance in the quantity of both anomaly types, we set the number of feature anomalies to $p \times q$, implying that the above operation is repeated $p \times q$ times to generate all feature anomalies.

\begin{table*}[t]
\caption{The statistics of datasets.} 
\centering
\label{tab:dsets_statis}
\resizebox{0.91\textwidth}{!}{
\begin{tabular}{l|cc|cccccc}
\toprule
Dataset & Train & Test & \#Nodes & \#Edges & \#Features & Avg. Degree &  \#Anomaly & \%Anomaly  \\
\midrule
\rowcolor[HTML]{E9E9E9} 
        \multicolumn{9}{c}{Citation network with injected anomalies} \\
Cora & - & $\checkmark$ & 2,708 & 5,429 & 1,433 & 3.90 &  150 & 5.53 \\
CiteSeer & - & $\checkmark$ & 3,327 & 4,732 & 3,703 & 2.77 &  150 & 4.50 \\
ACM & - & $\checkmark$ & 16,484 & 71,980 & 8,337 & 8.73 &  597 & 3.62 \\
PubMed & $\checkmark$ & - & 19,717 & 44,338 & 500 & 4.50 &  600 & 3.04 \\

\midrule
\rowcolor[HTML]{E9E9E9} 
        \multicolumn{9}{c}{Social network with injected anomalies} \\
BlogCatalog & - & $\checkmark$ & 5,196 & 171,743 & 8,189 & 66.11 &  300 & 5.77 \\
Flickr & $\checkmark$ & - & 7,575 & 239,738 & 12,047 & 63.30 &  450 & 5.94 \\
\midrule
\rowcolor[HTML]{E9E9E9} 
        \multicolumn{9}{c}{Social network with real anomalies} \\
Facebook & - & $\checkmark$ & 1,081 & 55,104 & 576 & 50.97 & 25 & 2.31 \\
Weibo & - & $\checkmark$ & 8,405 & 407,963 & 400 & 48.53 & 868 & 10.30 \\
Reddit & - & $\checkmark$ & 10,984 & 168,016 & 64 & 15.30 & 366 & 3.33 \\
Questions & $\checkmark$ & - & 48,921 & 153,540 & 301 & 3.13 & 1,460 & 2.98 \\

\midrule
\rowcolor[HTML]{E9E9E9} 
        \multicolumn{9}{c}{Co-review network with real anomalies} \\
YelpChi & $\checkmark$ & - & 23,831 & 49,315 & 32 & 2.07 & 1,217 & 5.10 \\
\midrule
\rowcolor[HTML]{E9E9E9} 
        \multicolumn{9}{c}{Co-author network with injected anomalies} \\
CoAuthor CS & - & $\checkmark$ & 18,333 & 163,788 & 6,805 & 8.93 & 600 & 3.27 \\
\midrule
\rowcolor[HTML]{E9E9E9} 
        \multicolumn{9}{c}{Co-purchase network with injected anomalies} \\
Amazon Photo & - & $\checkmark$ & 7,650 & 238,162 & 745 & 31.13 & 450 & 5.88 \\
\midrule
\rowcolor[HTML]{E9E9E9} 
        \multicolumn{9}{c}{Crowd-sourcing Service network with real anomalies} \\
Tolokers & - & $\checkmark$ & 11,758 & 519,000 & 10 & 44.14 & 2,566 & 21.82 \\
\midrule
\rowcolor[HTML]{E9E9E9} 
        \multicolumn{9}{c}{Finance network with real anomalies} \\
T-Finance & - & $\checkmark$ & 39,357 & 21,222,543 & 10 & 539.23 & 1,803 & 4.58 \\
\bottomrule
\end{tabular}
}
\end{table*}

\subsection{Implementation Details} \label{sec:appendix_implement}

\paragraph{Metrics.}
Following prior GAD protocols~\citep{ding2019deep,liu2021anomaly,liu2024arc,niu2024zero,qiao2025anomalygfm}, we report AUROC and AUPRC, two standard metrics for anomaly detection. AUROC is obtained by ranking nodes according to their anomaly scores and computing the area under the ROC curve, while AUPRC is measured as the area under the precision-recall curve. Higher AUROC and AUPRC values indicate better detection performance. All results are averaged over five independent runs with different random seeds, and reported as mean$\pm$std.

\textbf{Computing infrastructure.}
Experiments are carried out on a workstation running Ubuntu~22.04. 
The machine is equipped with an AMD EPYC~7542 processor (32 cores), a NVIDIA RTX~4090 GPU with CUDA~12.2 support, and 500\,GiB of system memory. 

\textbf{Software stack.}
We used Python~3.9.22, PyTorch~2.3.1+cu121, torchvision~0.18.1+cu121, torchaudio~2.3.1+cu121,
and DGL~0.9.0. For PyG we used torch-geometric~2.6.1 with
torch-scatter~2.1.2+pt23cu121, torch-sparse~0.6.18+pt23cu121,
torch-cluster~1.6.3+pt23cu121, and torch-spline-conv~1.2.2+pt23cu121.

\textbf{Implementation details.}
We adopt five representative pre-trained graph SSL teacher models, namely GCA~\citep{zhu2021graph} (default), GraphCL~\citep{you2020graph}, BGRL~\citep{thakoor2022large}, DGI~\citep{velivckovic2019deep} and GraphMAE~\citep{hou2022graphmae}. GraphMAE used its publicly available source code, while the other models used the official implementations and hyperparameter settings provided by the PyG-SSL toolkit~\citep{zheng2024pyg}. In addition, to prevent data leakage, the five representative graph SSL models are pre-trained exclusively on the training graphs $\mathcal{T}_{\text{train}}$. 
For each teacher model pre-training across multiple graphs, we follow established protocols~\citep{dong2024universal,liu2024arc}: in each pre-training epoch, the teacher model is trained \emph{sequentially} on the four graphs in $\mathcal{T}_{\text{train}}$. This ensures that the teacher model is trained across all training graphs following standard multi-graph SSL procedures. 
All baselines are configured to follow the same multi-graph pre-training paradigm for a fair comparison.
For prototype initialization, we collect the node features from all nodes in the training graphs $\mathcal{T}_{\text{train}}$, concatenate them into a single feature matrix, and apply FAISS k-means to derive the initial prototypes used by ProMoS.
For the rest of ProMoS, we conduct small grid sweeps around the default for all hyperparameters.
The learning rate spans
\(\{10^{-2},\,5\times10^{-2},\,10^{-3},\,2\times10^{-3},\,5\times10^{-3},\,10^{-4},\,5\times10^{-4},\,10^{-5}\}\);
the number of training epochs is
\(\{5,\,10,\,25,\,50,\,75,\,100,\,150,\,200\}\);
the trade-off parameter \(\lambda\) is
\(\{0.1,\,0.3,\,0.5,\,0.7,\,1.0,\,1.2,\,1.5,\,2.0\}\);
the number of students \(N\) is
\(\{2,\,5,\,10,\,15,\,20,\,25,\,30,\,50\}\);
the number of prototypes \(M_b\) is
\(\{2,\,5,\,10,\,15,\,20,\,25,\,30,\,50\}\);
the routed activate top-\(K\) students is
\(\{2,\,4,\,6,\,8,\,10,\,12,\,16,\,20\}\);
the sharpness \(\beta\) is
\(\{0.2,\,0.4,\,0.6,\,0.8,\,1.0,\,1.2,\,1.6,\,2.0\}\);
the margin \(\mu\) is
\(\{0.1,\,0.2,\,0.4,\,0.5,\,0.6,\,0.8,\,0.9,\,1.0\}\);
and the temperature \(\tau\) is
\(\{0.2,\,0.5,\,0.7,\,1.0,\,1.5,\,2.0,\,2.5,\,3.0\}\).
Hyperparameters are selected by choosing the configuration with the lowest proxy objective. After tuning, we adopt the configuration with learning rate $5\times10^{-3}$, epochs is 10, trade-off parameter $\lambda=0.5$, number of students \(N\) is 20, number of prototypes \(M_B\) is 20, top-\(K\) is 2, sharpness $\beta=1$, margin $\mu=0.6$, temperature $\tau=2$. Importantly, ProMoS is trained once and then directly transferred to unseen test graphs, without any dataset-specific hyperparameter search or tuning on the test datasets. \looseness=-1

\section{Supplemental Experiments}
\label{sec:se}
\subsection{Comparison with Few-Shot Generalist GAD}\label{sec:few-shot}
We further compare our method with two representative few-shot generalist methods, ARC and AnomalyGFM, under the 10-shot setting. Both ARC and AnomalyGFM rely on supervised pre-training on the training graphs and require access to 10 labeled nodes from the unseen test graph at inference, whereas ProMoS remains fully unsupervised throughout. As shown in Table~\ref{tab:few-shot}, AnomalyGFM benefits noticeably from few-shot inference compared to its zero-shot performance. Nevertheless, ProMoS achieves the best AUROC on 6 out of 11 datasets and ranks second on the remaining 5, yielding a higher overall average than all few-shot baselines. These results underscore that our method not only delivers state-of-the-art performance but also eliminates the reliance on scarce and costly labels, thereby greatly simplifying practical deployment.

\begin{table*}[!t]
\caption{AUROC (\%, mean$\pm$std over five runs) on eleven datasets, comparing with supervised pre-training methods using few-shot inference (ARC~\citep{liu2024arc} and AnomalyGFM~\citep{qiao2025anomalygfm}), while ours is fully unsupervised and pre-trained only. \textcolor{firstcolor}{\textbf{\underline{$\mathbf{1^{st}}$}}} marks the best result, \textcolor{secondcolor}{\underline{$\mathbf{2^{nd}}$}} the runner-up, and \textcolor{thirdcolor}{\underline{$\mathbf{3^{rd}}$}}.}
\centering
\label{tab:few-shot}

\resizebox{0.9\textwidth}{!}{
\begin{tabular}{l|cccccc}
\toprule
Method & Cora & CiteSeer & ACM & BlogCatalog & Facebook & Weibo \\
\midrule
\rowcolor[HTML]{E9E9E9} 
\multicolumn{7}{c}{Supervised - Pre-Train \& Few-Shot Inference} \\
ARC & $\first{87.45{\scriptstyle\pm0.74}}$ & $\first{90.95{\scriptstyle\pm0.59}}$ & $\second{79.88{\scriptstyle\pm0.28}}$ & $\second{74.76{\scriptstyle\pm0.06}}$ & $\third{67.56{\scriptstyle\pm1.60}}$ & $\second{88.85{\scriptstyle\pm0.14}}$ \\
AnomalyGFM & $\third{56.31{\scriptstyle\pm1.11}}$ & $\third{51.27{\scriptstyle\pm1.48}}$ & $\third{64.03{\scriptstyle\pm0.93}}$ & $\third{67.68{\scriptstyle\pm0.99}}$ & $\first{75.66{\scriptstyle\pm2.46}}$ & $\third{56.91{\scriptstyle\pm2.55}}$ \\
\midrule
\rowcolor[HTML]{E9E9E9} 
\multicolumn{7}{c}{Unsupervised - Pre-Train Only} \\
ProMoS & $\second{84.56{\scriptstyle\pm0.16}}$ & $\second{90.77{\scriptstyle\pm0.12}}$ & $\first{89.47{\scriptstyle\pm0.79}}$ & $\first{76.17{\scriptstyle\pm0.37}}$ & $\second{69.31{\scriptstyle\pm0.50}}$ & $\first{91.74{\scriptstyle\pm0.03}}$ \\
\bottomrule
\end{tabular}
}

\resizebox{0.9\textwidth}{!}{
\begin{tabular}{l|cccccc}
\toprule
Method & Reddit & CS & Photo & Tolokers & T\text{-}Finance & Avg. \\
\midrule
\rowcolor[HTML]{E9E9E9} 
\multicolumn{7}{c}{Supervised - Pre-Train \& Few-Shot Inference} \\
ARC & $\second{60.04{\scriptstyle\pm0.69}}$ & $\second{82.73{\scriptstyle\pm0.48}}$ & $\first{75.55{\scriptstyle\pm0.72}}$ & $\first{53.42{\scriptstyle\pm2.70}}$ & $\second{64.10{\scriptstyle\pm3.10}}$ & $\second{75.03 {\scriptstyle\pm12.45}}$ \\
AnomalyGFM & $\third{50.18{\scriptstyle\pm3.52}}$ & $\third{50.99{\scriptstyle\pm1.15}}$ & $\third{61.19{\scriptstyle\pm0.84}}$ & $\third{59.36{\scriptstyle\pm5.44}}$ & $\third{56.97{\scriptstyle\pm4.04}}$ & $\third{59.14 {\scriptstyle\pm7.75 }}$ \\
\midrule
\rowcolor[HTML]{E9E9E9} 
\multicolumn{7}{c}{Unsupervised - Pre-Train Only} \\
ProMoS & $\first{60.83{\scriptstyle\pm0.35}}$ & $\first{88.85{\scriptstyle\pm0.60}}$ & $\second{72.67{\scriptstyle\pm1.07}}$ & $\second{52.80{\scriptstyle\pm0.82}}$ & $\first{71.62{\scriptstyle\pm1.06}}$ & $\first{77.16 {\scriptstyle\pm13.09 }}$ \\
\bottomrule
\end{tabular}
}

\end{table*}

\subsection{Comparison of Pretraining Dataset Configurations}
To ensure fairness, our main experiments follow the ARC protocol and pre-train the teacher models on PubMed, Flickr, Questions, and YelpChi—the configuration adopted in the first generalist GAD study. To assess the influence of the choice of pretraining datasets on downstream performance, we further evaluate an alternative configuration (Alt-DPD) in which the teacher is pre-trained on a different set of graphs: Cora, CiteSeer, ACM, and PubMed.

Table~\ref{tab:pretrain_dataset_ablation} reports the results across six target graphs. The alternative configuration achieves performance highly comparable to that of ProMoS, with average AUCs of 72.70 and 72.50, respectively. These results indicate that ProMoS is not sensitive to the specific selection of pretraining datasets and generalizes well across different pretraining regimes.

\begin{table}[!t]
\caption{Comparison of pretraining dataset configurations.}
\label{tab:pretrain_dataset_ablation}
\centering
\renewcommand{\baselinestretch}{1.2}
\setlength{\tabcolsep}{5pt}
\resizebox{0.9\textwidth}{!}{
\begin{tabular}{l|cccccc|c}
\toprule
Method & Facebook & Weibo & Reddit & CS & Photo & Tolokers & Avg. \\
\midrule
\rowcolor[HTML]{E9E9E9}
ProMoS 
& $69.31{\scriptstyle\pm0.50}$ 
& $91.74{\scriptstyle\pm0.03}$ 
& $60.83{\scriptstyle\pm0.35}$ 
& $88.85{\scriptstyle\pm0.60}$ 
& $72.67{\scriptstyle\pm1.07}$ 
& $52.80{\scriptstyle\pm0.82}$ 
& $\mathbf{72.70}$ \\
\midrule
Alt-DPD 
& $68.58{\scriptstyle\pm0.38}$ 
& $91.77{\scriptstyle\pm0.07}$ 
& $58.15{\scriptstyle\pm1.58}$ 
& $89.15{\scriptstyle\pm0.32}$ 
& $73.39{\scriptstyle\pm1.15}$ 
& $53.96{\scriptstyle\pm0.66}$ 
& $72.50$ \\
\bottomrule
\end{tabular}}
\end{table}

\subsection{More Ablation Study}\label{sec:abl}
We further evaluated five ProMoS variants: (1) w/o PB drops the personalized branch in the MoS design; (2) w/o SB drops the shared branch; (3) w/o DIS removes the quality control weights $w_i^b$ in the commitment and refinement stage; (4) w/o PB \& DIS drops the personalized branch and $w_i^b$; and (5) w/o SB \& DIS drops the shared branch and $w_i^b$.

Table~\ref{tab:app_ablation} reports the results. Comparing MoS branches, w/o SB performs better than w/o PB, suggesting the importance of the personalized branch for capturing complementary knowledge. Finally, disabling the data-quality weighting in the commitment and refinement stage (w/o DIS) also causes a measurable decline, further confirming its role in stabilizing training and improving detection robustness. 

Notably, these components do not operate independently. SB and PB jointly constitute the Mixture-of-Students architecture, modeling global and local normality from complementary perspectives, while DIS modulates prototype updates by emphasizing high-quality semantic structures. To explicitly examine their interaction, we further introduce joint ablations that remove DIS together with either PB or SB. As shown in Table~\ref{tab:app_ablation}, both joint variants suffer substantial degradation across all datasets. On ACM, performance drops by 9.80\% and 9.20\% for w/o PB \& DIS and w/o SB \& DIS, respectively, while on CS, both variants incur reductions exceeding 6\%.

These drops are significantly larger than those observed in the corresponding single ablations, demonstrating strong interdependence among PB, SB, and DIS. Overall, the results indicate that these components address complementary aspects of generalist GAD and that their joint design is crucial for the robustness and stability of ProMoS.

\begin{table*}[!t]
\caption{Ablation results w.r.t. AUC for ProMoS and its variants.}
\label{tab:app_ablation}
\centering
\setlength{\tabcolsep}{5pt}
\resizebox{0.7\textwidth}{!}{
\begin{tabular}{l|cccc|c}
\toprule
Method & T\text{-}Finance & Reddit & ACM & CS & Avg. \\
\midrule
\rowcolor[HTML]{E9E9E9}
ProMoS 
& $\mathbf{71.62{\scriptstyle\pm1.06}}$
& $\mathbf{60.83{\scriptstyle\pm0.35}}$
& $\mathbf{89.47{\scriptstyle\pm0.79}}$
& $\mathbf{88.85{\scriptstyle\pm0.60}}$
& $\mathbf{77.69}$ \\
\midrule
w/o PB  
& $69.88{\scriptstyle\pm1.33}$
& $59.78{\scriptstyle\pm1.80}$
& $89.39{\scriptstyle\pm1.01}$
& $88.33{\scriptstyle\pm0.25}$
& $76.85$ \\
w/o SB  
& $70.63{\scriptstyle\pm0.48}$
& $58.48{\scriptstyle\pm2.89}$
& $89.21{\scriptstyle\pm0.93}$
& $88.27{\scriptstyle\pm1.66}$
& $76.65$ \\
w/o DIS 
& $69.49{\scriptstyle\pm1.19}$
& $59.55{\scriptstyle\pm1.87}$
& $88.99{\scriptstyle\pm0.88}$
& $88.12{\scriptstyle\pm0.33}$
& $76.54$ \\
w/o PB \& DIS
& $65.11{\scriptstyle\pm1.43}$
& $55.98{\scriptstyle\pm2.11}$
& $79.67{\scriptstyle\pm0.64}$
& $82.02{\scriptstyle\pm0.26}$
& $70.70$ \\
w/o SB \& DIS
& $65.52{\scriptstyle\pm0.81}$
& $56.31{\scriptstyle\pm0.86}$
& $80.27{\scriptstyle\pm0.44}$
& $82.18{\scriptstyle\pm0.75}$
& $71.07$ \\
\bottomrule
\end{tabular}}
\end{table*}

\begin{figure*}
\centering
\subfloat[]{\label{fig:vis1}\includegraphics[width=0.25\linewidth]{
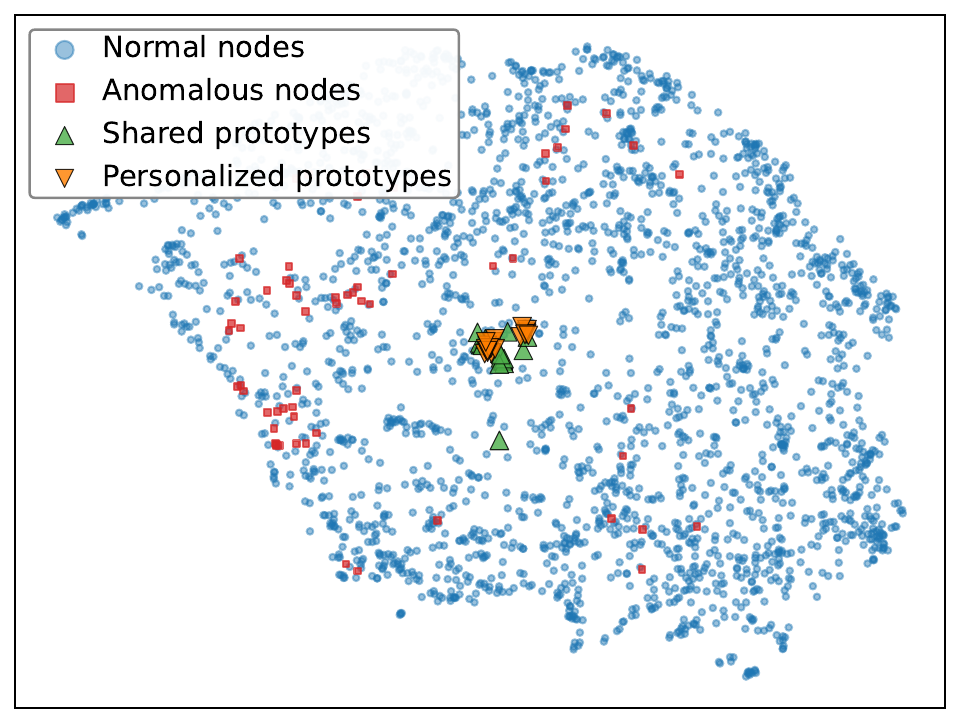}}
\subfloat[]{\label{fig:vis2}\includegraphics[width=0.25\linewidth]{
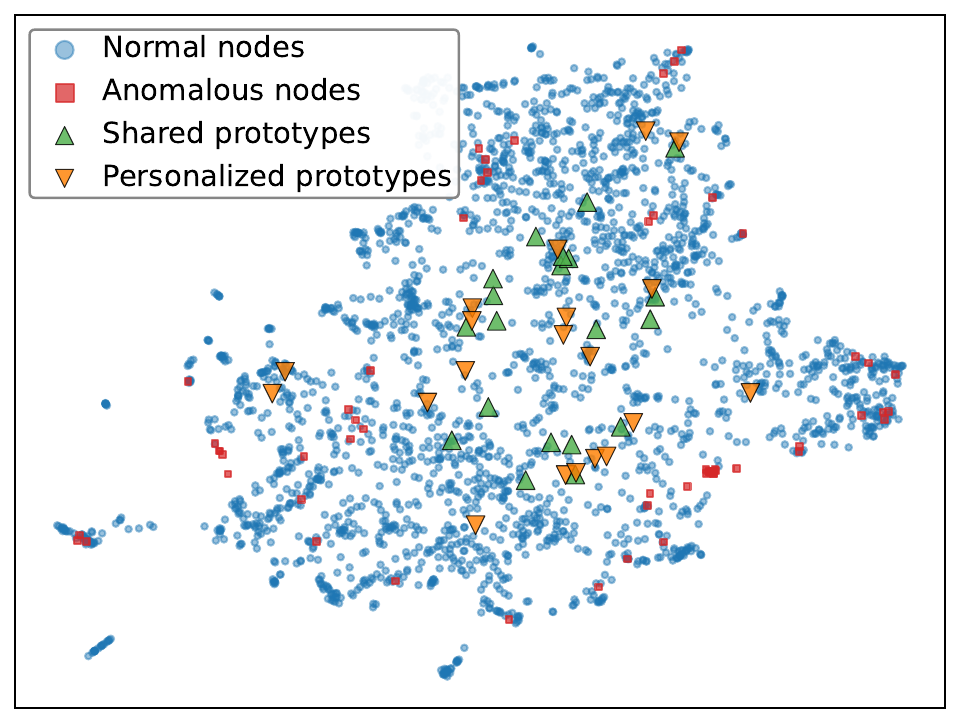}}
\subfloat[]{\label{fig:vis3}\includegraphics[width=0.25\linewidth]{
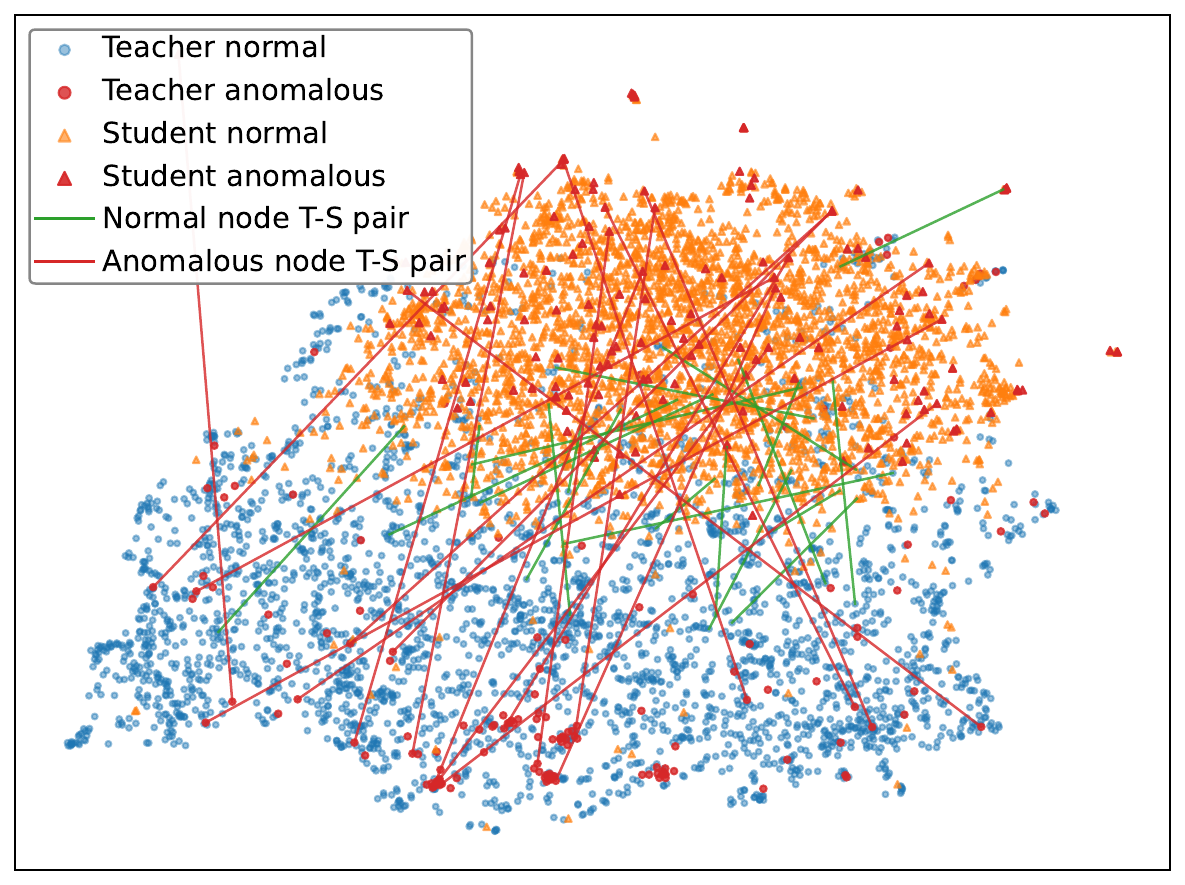}}
\subfloat[]{\label{fig:vis4}\includegraphics[width=0.25\linewidth]{
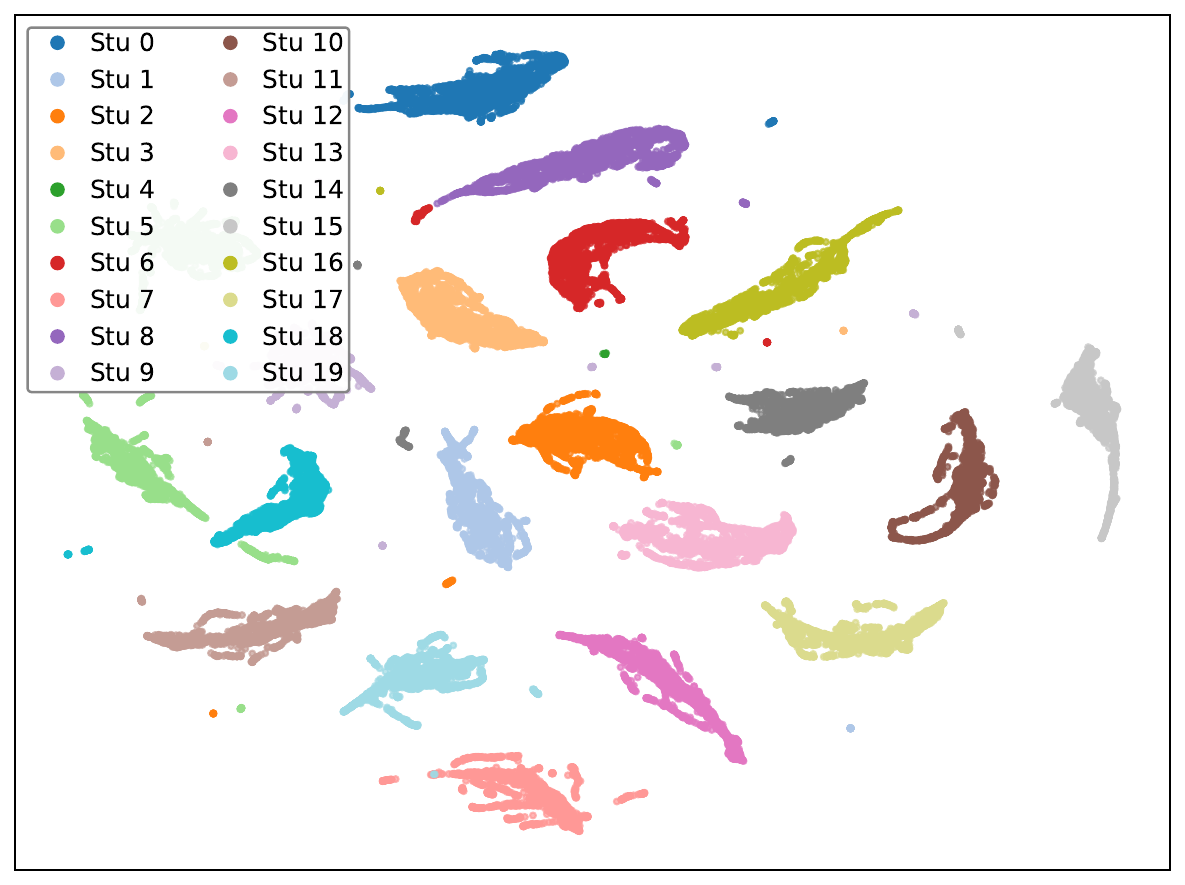}}
\caption{Visualization analysis of ProMoS. 
The four subfigures respectively show: 
(a) the embedding distributions w/o $\mathcal{L}_{\text{DCR}}$ on Cora; 
(b) the embedding distributions w/ $\mathcal{L}_{\text{DCR}}$ on Cora; 
(c) the teacher–student feature alignment on CiteSeer; 
and (d) the student-specific embedding on Weibo.}
\label{fig:vis}
\end{figure*}

\subsection{Visualization Analysis}
To further examine the effectiveness of the proposed components, we provide three additional visualization studies in Figure~\ref{fig:vis}.

\textbf{Effectiveness of the commitment and refinement losses.}
Figure~\ref{fig:vis1} and Figure~\ref{fig:vis2} visualize the standardized teacher outputs on an unseen Cora graph using t-SNE. Normal nodes (blue circles) and anomalous nodes (red squares) are plotted together with the shared and personalized prototypes projected into the same embedding space. Comparing the model without the discrepancy-aware commitment and refinement loss (w/o $\mathcal{L}_{\text{DCR}}$) to the model with it (w/ $\mathcal{L}_{\text{DCR}}$) reveals a clear contrast: w/o $\mathcal{L}_{\text{DCR}}$ (Figure~\ref{fig:vis1}), the teacher features become scattered and unstructured under distribution shift, and the prototypes lose semantic anchoring. In contrast, w/ $\mathcal{L}_{\text{DCR}}$ (Figure~\ref{fig:vis2}), both shared and personalized prototypes remain discriminative and representative, and anomalous nodes are more likely to be pushed away from any prototype regions. These observations indicate that the proposed DCR mechanism optimizes the prototype space and regularizes the teacher representation space, thereby strengthening out-of-distribution robustness.

\textbf{Effectiveness of the Two-Branch MoS Architecture.}
Figure~\ref{fig:vis2} further illustrates the distinct yet complementary roles of the shared and personalized branches in our Mixture-of-Students architecture. The shared prototypes (triangles) consistently lie near the central, high-density semantic regions, capturing global normality patterns that are transferable across graphs. In contrast, the personalized prototypes (inverted triangles) capture local patterns, often spreading to peripheral or fine-grained regions that the shared branch cannot model. This division of labor aligns with the intended design of MoS: the shared branch provides a universal backbone of normality, while personalized students specialize in diverse patterns.
Moreover, the ablation results in Table~\ref{tab:app_ablation} reinforce this observation. Eliminating either branch leads to performance degradation, demonstrating that both components are indispensable for capturing the full spectrum of normality patterns required for robust generalist GAD.

\textbf{Teacher–Student Feature Distributions}
Figure~\ref{fig:vis3} illustrates the projections of the teacher (blue circles) and student (yellow triangles) representations on the unseen CiteSeer graph. Overall, the two distributions exhibit no substantial global gap, indicating that the student network is able to approximate the teacher’s representation space even under distribution shift. 
To obtain a more fine-grained view, we randomly sample 20 normal teacher–student pairs and 20 anomalous teacher–student pairs, where each pair consists of a node’s teacher embedding and the corresponding student embedding, and connect each pair with a line segment.

The visualization reveals a clear pattern. Normal pairs (green lines) show consistently shorter teacher–student distances: their embeddings align closely in the representation space, demonstrating that the MoS architecture can better reconstruct the teacher’s normality patterns. In contrast, anomalous pairs (red lines) exhibit significantly larger distortions. Due to their irregular semantics and weaker prototype affinity, anomalous nodes cannot be faithfully reconstructed by the students, resulting in pronounced divergence in their representations. This behavior aligns precisely with our anomaly scoring mechanism, where reconstruction deviation serves as a crucial indicator of abnormality. These results confirm that the student network closely matches the teacher distribution for normal nodes while naturally amplifying deviations on anomalous nodes—exactly the behavior desired for zero-shot anomaly detection.

\textbf{Visual Evidence of Student Diversity}
To directly examine whether personalized students capture different modes, we visualize student-specific embeddings on the Weibo dataset (Figure~\ref{fig:vis4}). For each of the 20 students, we extract the personalized-branch outputs and project them into a two-dimensional space, with points colored by student index. The resulting clusters are clearly separated across students, indicating that each student focuses on distinct semantic patterns in the feature space. This empirical pattern is consistent with the design of ProMoS. The diversity among student models is structurally encouraged by the sparse Top-K routing mechanism: instead of sending every sample to every student, Top-K routing assigns each student only a subset of nodes, so semantically similar nodes are routed to partially overlapping students while each student is trained on a different portion of the graph. This routing scheme naturally guides different students toward different subsets of data and fosters meaningful specialization without requiring an explicit diversity regularizer.

\begin{figure*}[!t]
  \centering
  \begin{minipage}{0.695\linewidth}
    \centering
    \includegraphics[width=\linewidth]{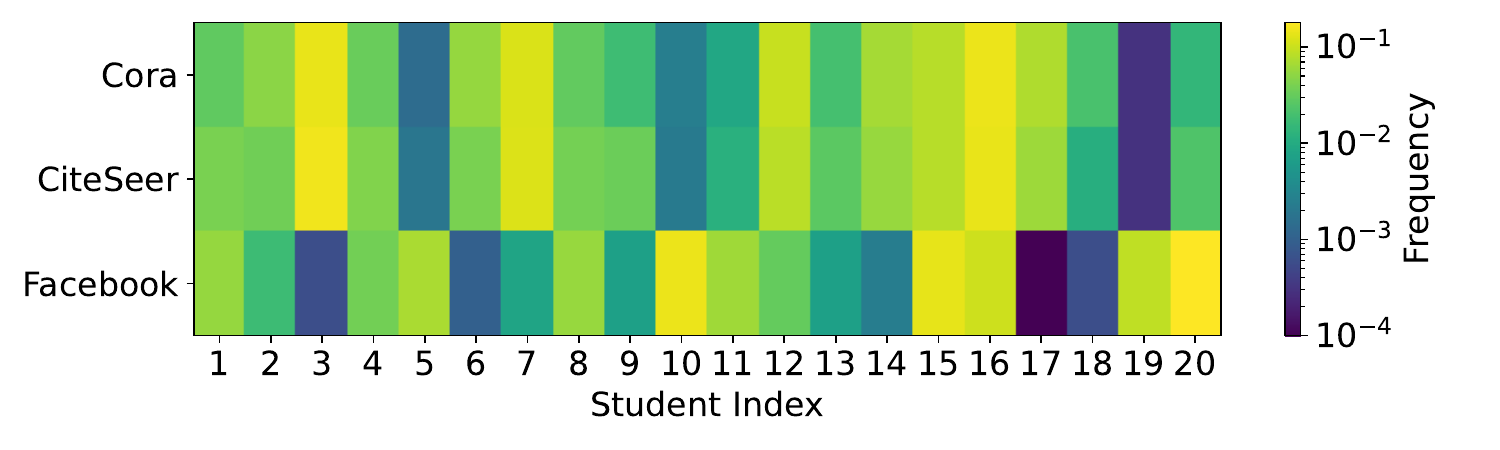}
    \caption{Student-wise activation frequency during inference on Cora, CiteSeer, and Facebook.}
    \label{fig:freq1}
  \end{minipage}
  \hfill
  \begin{minipage}{0.295\linewidth}
    \centering
    \includegraphics[width=\linewidth]{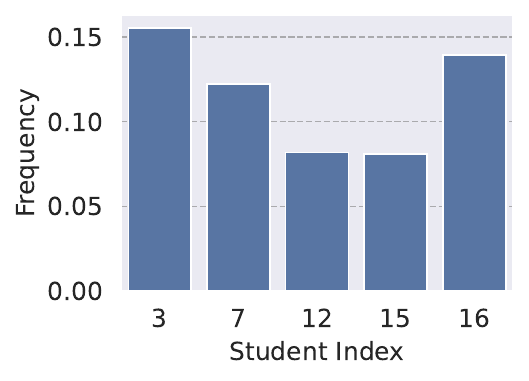}
    \caption{Top-5 frequency in Citeseer.}
    \label{fig:freq2}
  \end{minipage}
\end{figure*}

\subsection{Student Activation Analysis}
This subsection provides an empirical analysis of student activation behavior in ProMoS. 
Rather than introducing an explicit diversity regularizer, ProMoS encourages specialization through the sparse Top-$K$ routing in Eq.\ref{eq:route}, which assigns each node to a small subset of students based on its representation. 
This input-dependent routing exposes different students to different portions of the data distribution, naturally leading to differentiated and specialized student behaviors.

\paragraph{Student-wise activation frequency.}
We analyze the activation frequency of student models during inference to better understand the routing behavior induced by sparse Top-$K$ selection. Specifically, for each node, we record the students selected by Top-$K$ routing and accumulate their (normalized) activation counts over all test nodes. Figure~\ref{fig:freq1} visualizes the resulting student-wise activation frequencies on Cora, CiteSeer, and Facebook.

Across datasets, the activation mass is distributed over multiple students, with no single student dominating the routing decisions. Moreover, the activation patterns vary across graphs, indicating dataset-dependent student utilization rather than uniform averaging. For example, on Cora and CiteSeer, a subset of students (e.g., 3, 7, and 16) is more frequently activated, while on Facebook, the distribution shifts to a different subset, reflecting differences in the underlying data characteristics.

To further illustrate the selectivity of the routing mechanism, Figure~\ref{fig:freq2} reports the Top-$5$ most frequently activated students on CiteSeer, showing that only a small subset of students is repeatedly selected under sparse routing. Complementary to these frequency-based statistics, we also visualize student-specific embeddings on Weibo in Figure~\ref{fig:vis4}, where embeddings produced by different students form clearly separated clusters in the feature space. Together, these observations suggest that ProMoS leverages the student ensemble in a selective and input-dependent manner, with different students focusing on different regions of the data distribution.

\paragraph{Ablation with an explicit diversity regularizer.}
While sparse Top-$K$ routing already induces non-trivial specialization among students, we further explore an explicit diversity regularizer as an exploratory extension. The regularizer encourages the most highly activated students for the same node to produce dissimilar outputs by penalizing their similarity in the representation space:
\[
\mathcal{L}_{\text{DIV}}= -\frac{1}{2|\mathcal{V}|}\sum_{i\in\mathcal{V}}\Big[\mathrm{KL}\!\left(\mathbf{h}_{i,1}^{\ell}\,\big\Vert\,\mathbf{h}_{i,2}^{\ell}\right)+\mathrm{KL}\!\left(\mathbf{h}_{i,2}^{\ell}\,\big\Vert\,\mathbf{h}_{i,1}^{\ell}\right)\Big],
\]
where $\mathbf{h}_{i,1}^{\ell}$ and $\mathbf{h}_{i,2}^{\ell}$ denote the outputs of the two most highly activated routed students for node $i$ (Eq.\ref{eq:stu_rep}). Intuitively, this term explicitly pushes different experts to focus on and encode different aspects of the input, thereby promoting diversity at the representation level.

As shown in Table~\ref{tab:app_div}, incorporating $\mathcal{L}_{\text{DIV}}$ leads to mixed results. With a small weight (e.g., $0.01$), the regularizer yields marginal improvements on certain datasets (e.g., T-Finance), but does not consistently improve performance across all settings. In contrast, a large weight (e.g., $1$) substantially degrades performance. One possible explanation is that the default sparse Top-$K$ routing already provides sufficient functional diversity, and overly strong separation constraints may restrict beneficial information sharing among students or interfere with the teacher--student prototype distillation objective. Overall, these results suggest that explicitly encouraging expert diversity is a promising direction, but that effective integration likely requires more principled, task-aware designs beyond a naive pairwise separation penalty.

\begin{table*}[!t]
\caption{Effect of Explicit Diversity Regularization on ProMoS (AUROC).}
\label{tab:app_div}
\centering
\setlength{\tabcolsep}{5pt}
\resizebox{0.9\textwidth}{!}{
\begin{tabular}{l|cccccc|c}
\toprule
Method & ACM & Facebook & Reddit & CS & Photo & T\text{-}Finance & Avg. \\
\midrule
\rowcolor[HTML]{E9E9E9}
ProMoS 
& $\mathbf{89.47{\scriptstyle\pm0.79}}$
& $\mathbf{69.31{\scriptstyle\pm0.50}}$
& $\mathbf{60.83{\scriptstyle\pm0.35}}$
& $\mathbf{88.85{\scriptstyle\pm0.60}}$
& $\mathbf{72.67{\scriptstyle\pm1.07}}$
& $\mathbf{71.62{\scriptstyle\pm1.06}}$
& $\mathbf{75.46}$ \\
\midrule
+ $\mathcal{L}_{\text{DIV}}\!\times\!0.01$
& $88.94{\scriptstyle\pm1.22}$
& $69.32{\scriptstyle\pm0.46}$
& $60.81{\scriptstyle\pm0.29}$
& $88.70{\scriptstyle\pm0.74}$
& $72.66{\scriptstyle\pm0.97}$
& $71.79{\scriptstyle\pm1.03}$
& $75.37$ \\
+ $\mathcal{L}_{\text{DIV}}\!\times\!1$
& $63.60{\scriptstyle\pm5.75}$
& $55.17{\scriptstyle\pm3.73}$
& $53.11{\scriptstyle\pm2.72}$
& $61.47{\scriptstyle\pm9.43}$
& $46.26{\scriptstyle\pm10.32}$
& $72.17{\scriptstyle\pm4.51}$
& $58.63$ \\
\bottomrule
\end{tabular}}
\end{table*}

\subsection{Hyperparameter Sensitivity Analysis}
\label{app:hyperparam}

We evaluate the sensitivity of key hyperparameters on ProMoS and report the average AUROC and AUPRC on all unseen test graphs.

\textbf{Effect of trade-off parameter $\lambda$.}
Trade-off parameter $\lambda$ balances prototype-guided soft-label distillation (PSD) and discrepancy-aware commitment \& refinement (DCR).
As shown in the figure~\ref{fig:param1}, small $\lambda$ leads to insufficient teacher constraints and prototype updates, while large $\lambda$ overconstrains the student and affects distillation performance. Therefore, we choose $\lambda=0.5$ by default.

\textbf{Effect of number of students \(N\).}
In the mixture-of-students (MoS) architecture, the number of students $N$ determines how many personalized branches are initialized. As shown in Figure~\ref{fig:param2}, increasing $N$ initially improves both AUROC and AUPRC. With too few students, the model lacks sufficient expressive capacity to capture all normal patterns encoded in the teacher, resulting in underfitting. As $N$ continues to grow, the performance gains saturate, and parameter efficiency diminishes, since additional students contribute marginally to representation diversity. In our experiments, we set the default number of students to $N=20$.

\textbf{Effect of number of prototypes $M_b$.}
As shown in Figure~\ref{fig:param3}, increasing $M_b$ from very small values to a mid-to-high range improves performance, after which the results become stable. Too few prototypes lead to underfitting of transferable semantics with overly coarse granularity, while too many prototypes fragment the space, slightly reducing robustness and increasing inference time. A mid-range $M_b$ offers the best trade-off.

\textbf{Effect of Top-$K$ activated students.}
As shown in Figure~\ref{fig:param4}, activating fewer students (e.g., $K=2,4,6$) achieves the best AUROC/AUPRC. Larger $K$ values dilute the distillation signals each student receives and prevent individual students from specializing in distinct patterns. This undermines the divide-and-conquer principle of MoS, causing different students to converge toward similar behaviors and ultimately degrading performance.

\textbf{Effect of Sharpness $\beta$ and margin $\mu$.}
Both parameters regulate the discrepancy-aware weighting by controlling how strongly unreliable samples are down-weighted. As shown in Figure~\ref{fig:param5} and ~\ref{fig:param6}, moderate values yield the best AUROC/AUPRC. With too small $\beta$, the weighting is overly soft, making reliable and unreliable samples indistinguishable. Increasing $\beta$ sharpens the reweighting and improves robustness, but excessive sharpness overemphasizes a few samples and amplifies noise, leading to performance drops. 
For $\mu$, a small margin down-weights many diverse yet reliable, informative nodes (with modest KL), while an excessively large $\mu$ inflates the weights of high-KL nodes—including unreliable or anomalous nodes—thereby degrading robustness. Hence, performance peaks at moderate $\beta$ and $\mu$.

\textbf{Effect of temperature $\tau$.}
In Prototype-guided Soft-label Distillation (PSD), the temperature coefficient $\tau$ controls the smoothness of the teacher’s soft labels. As shown in Figure~\ref{fig:param7}, a very low $\tau$ produces overly confident prototype posteriors, making the soft labels too sharp and limiting the transfer of richer semantic information. In contrast, a very high $\tau$ yields overly flat targets, which weakens the guidance signal and reduces the effectiveness of distillation. Moderate values of $\tau$ strike a balance, preserving informative relative probabilities while avoiding overconfidence, and thus achieve the best performance.

\textbf{Effect of training epochs.}
Interestingly, we observe a clear divergence between AUROC and AUPRC (Figure~\ref{fig:param8}). AUROC reaches its peak in the early stage of training and then gradually declines, whereas AUPRC continues to improve until it flattens out. This behavior arises because our method primarily models normal patterns: with prolonged training, the decision boundary becomes increasingly fitted to normal nodes, effectively shrinking the boundary. Such refinement benefits AUPRC, which is more sensitive to anomaly nodes, but slightly reduces AUROC by compressing the global ranking margins—some normal nodes near the boundary may be assigned higher anomaly scores, lowering the overall ranking quality. In practice, the choice of training epochs should be guided by the downstream evaluation priority: if overall ranking is crucial, early stopping is preferable, whereas if precision and recall of anomalies are emphasized, longer training may be beneficial. \looseness=-1

\begin{figure*}
\centering
\subfloat[Trade-off parameter $\lambda$]{\label{fig:param1}\includegraphics[width=0.25\linewidth]{
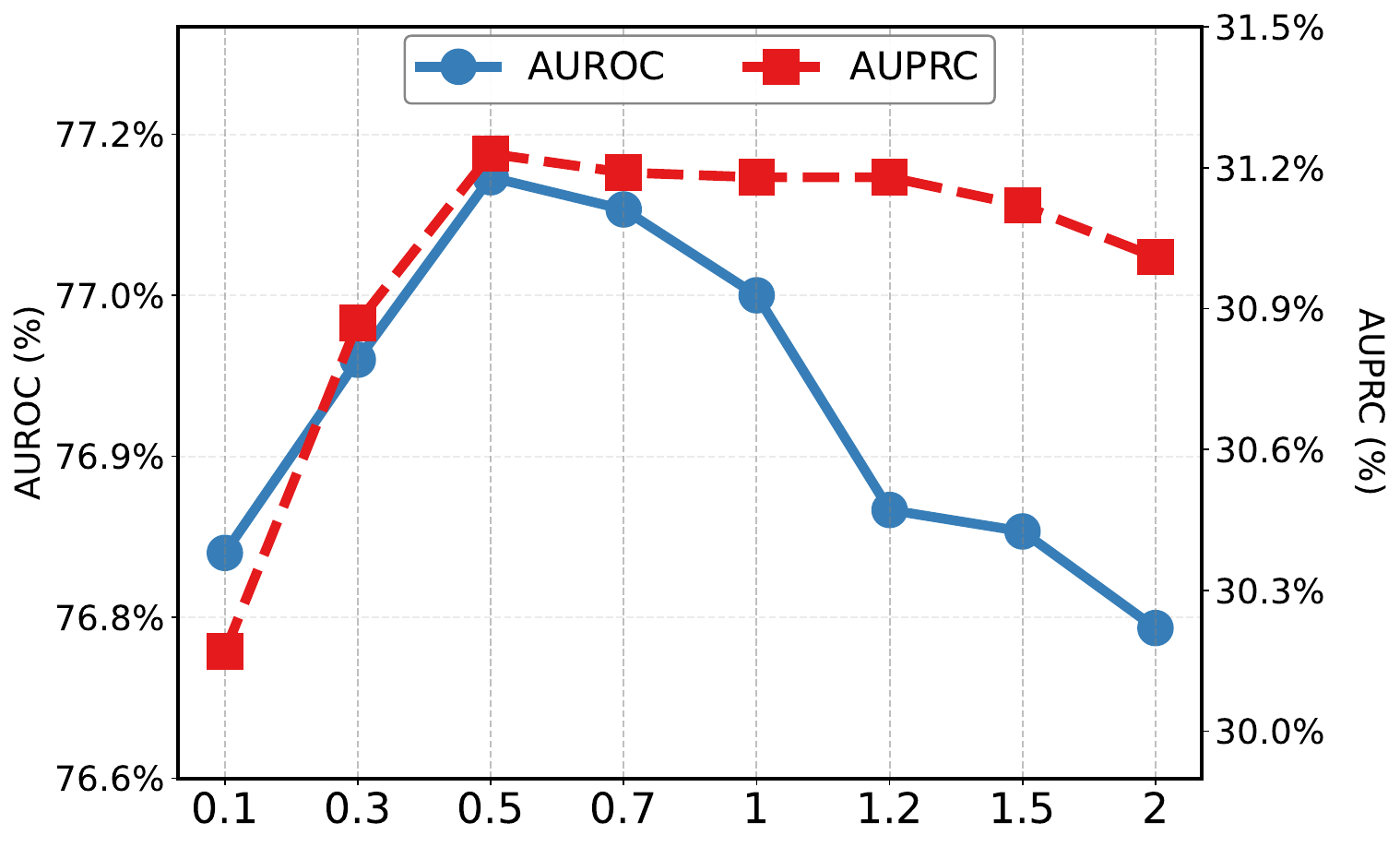}}
\subfloat[Number of Students]{\label{fig:param2}\includegraphics[width=0.25\linewidth]{
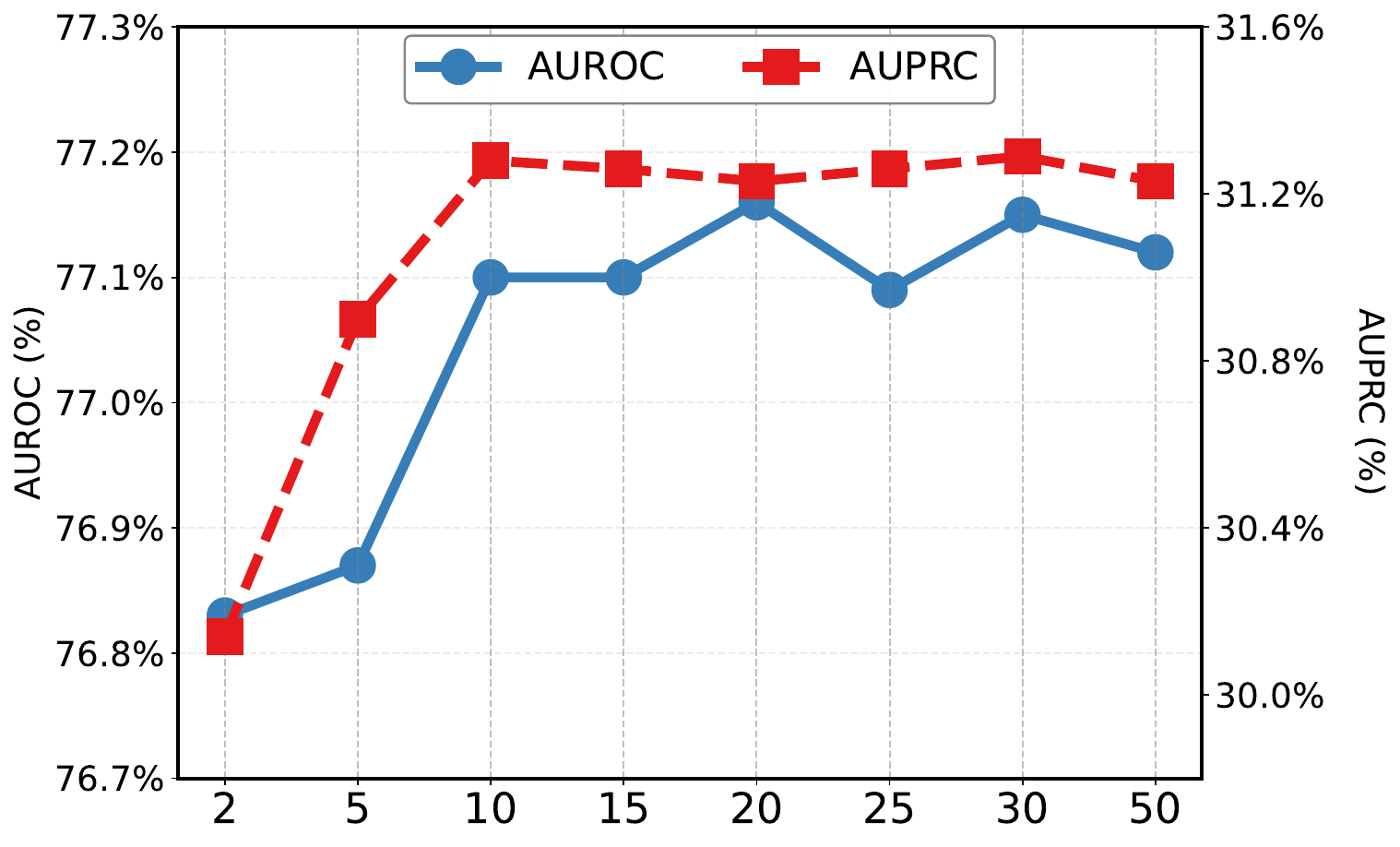}}
\subfloat[Number of Prototypes]{\label{fig:param3}\includegraphics[width=0.25\linewidth]{
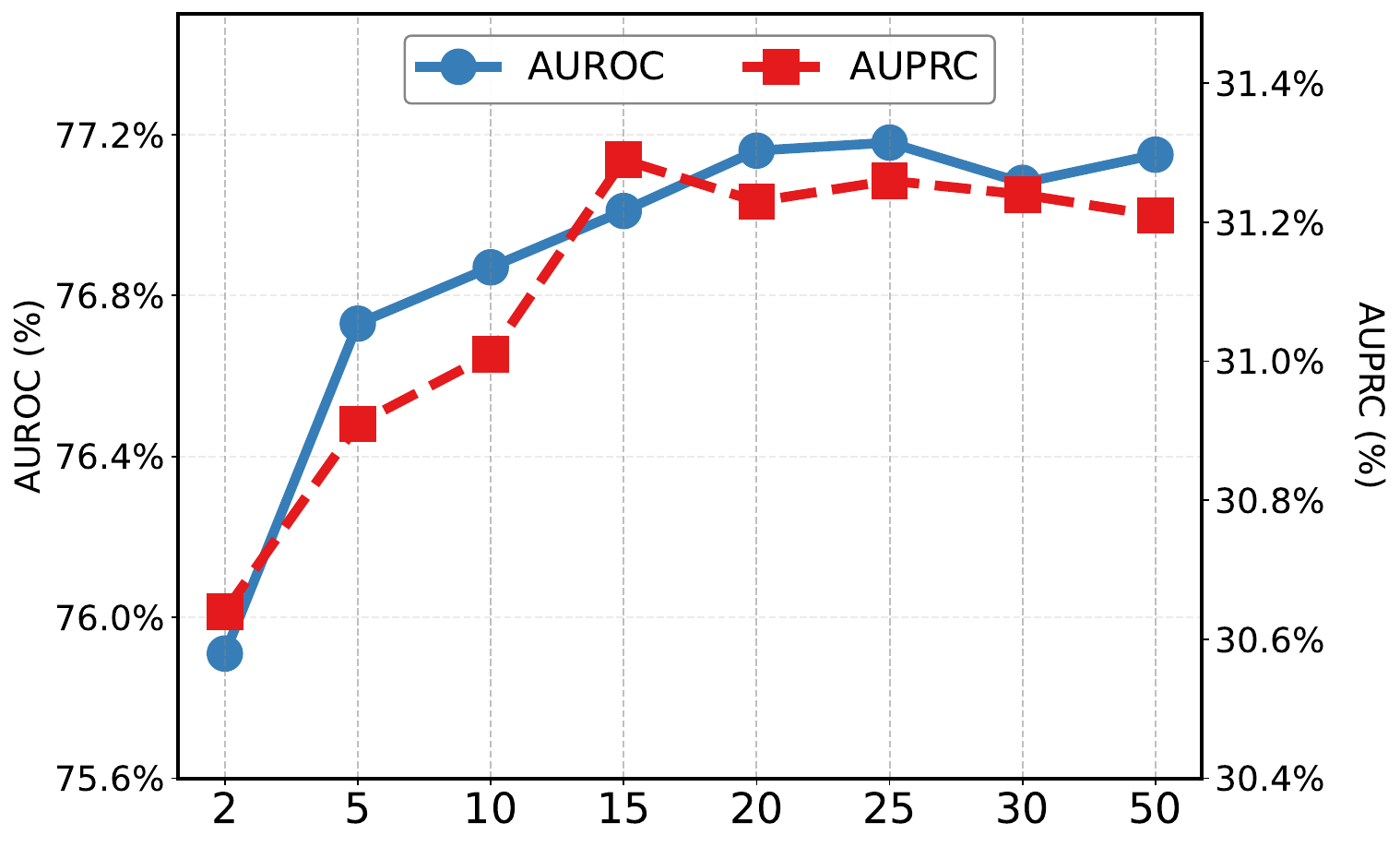}}
\subfloat[Top-$K$ student]{\label{fig:param4}\includegraphics[width=0.25\linewidth]{
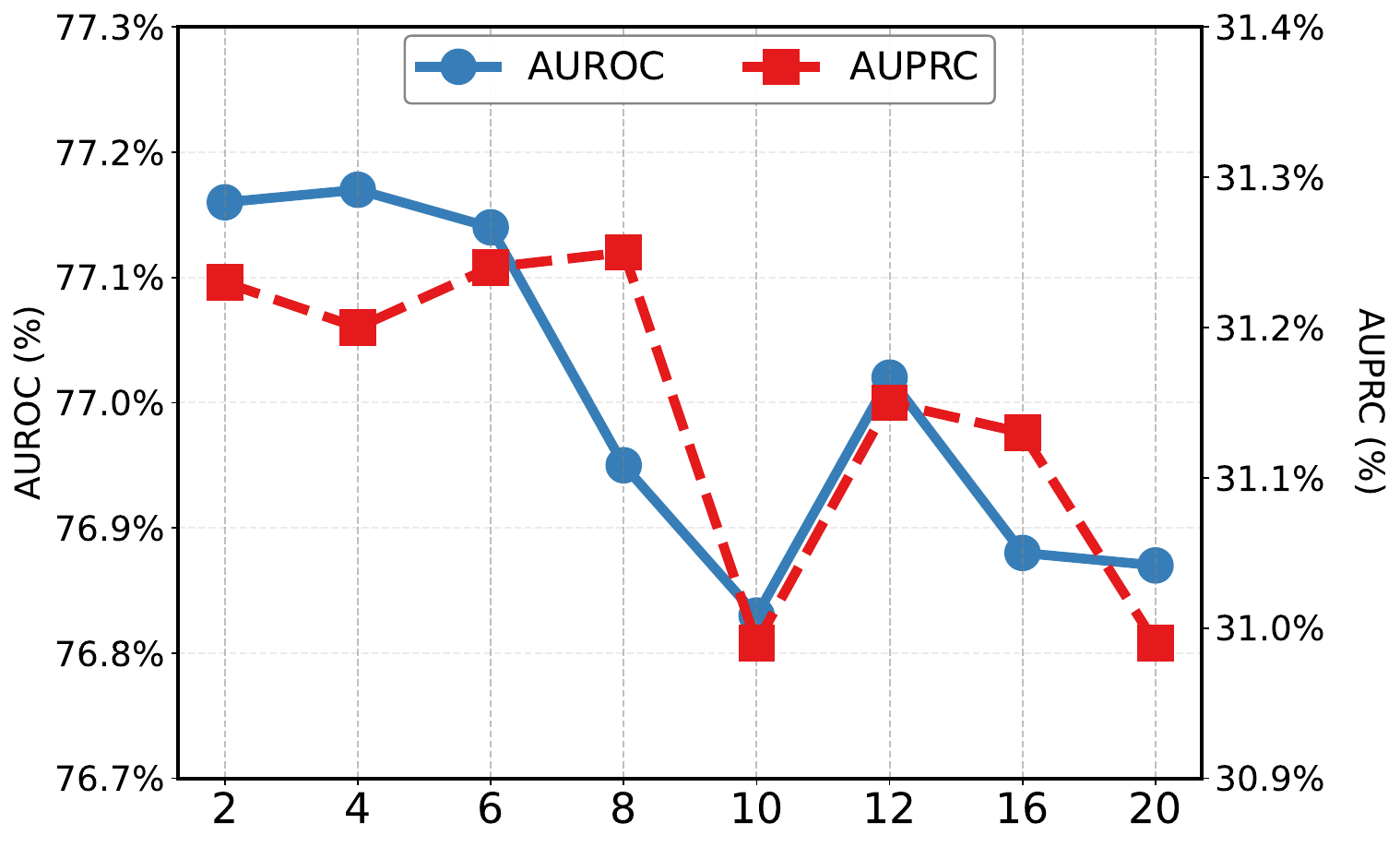}}\\
\subfloat[Sharpness parameter $\beta$]{\label{fig:param5}\includegraphics[width=0.25\linewidth]{
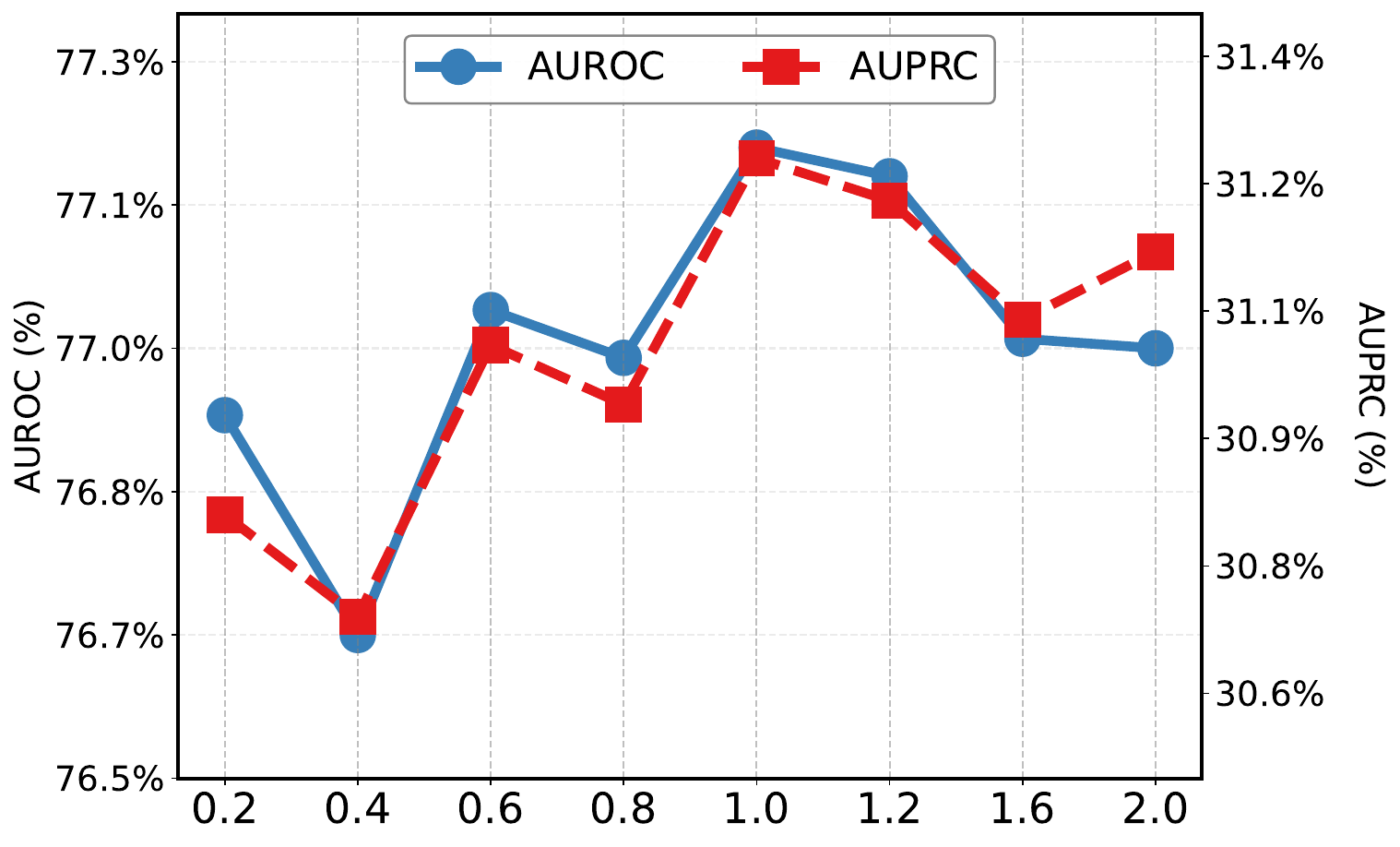}}
\subfloat[Margin parameter $\mu$]{\label{fig:param6}\includegraphics[width=0.25\linewidth]{
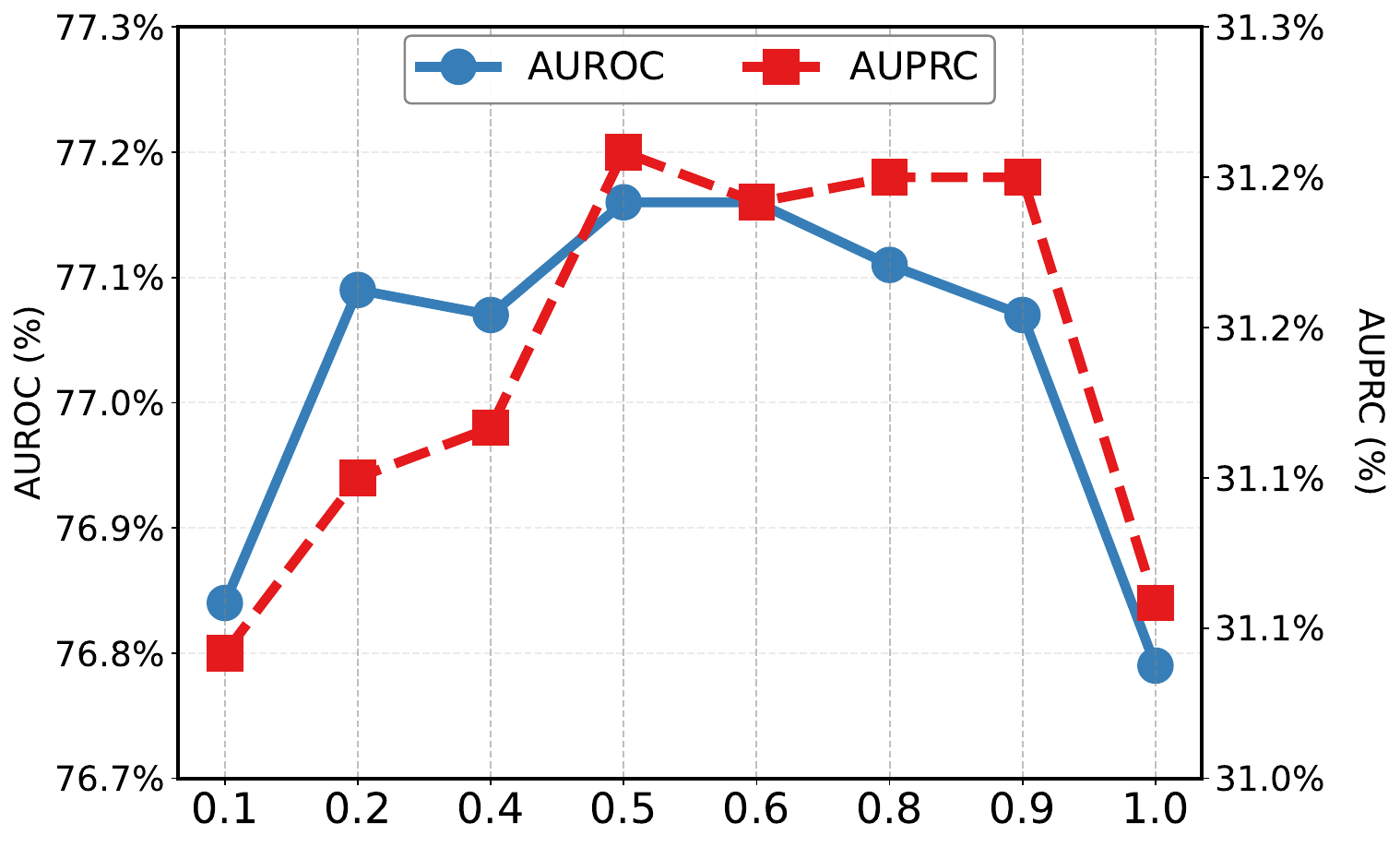}}
\subfloat[Temperature $\tau$]{\label{fig:param7}\includegraphics[width=0.25\linewidth]{
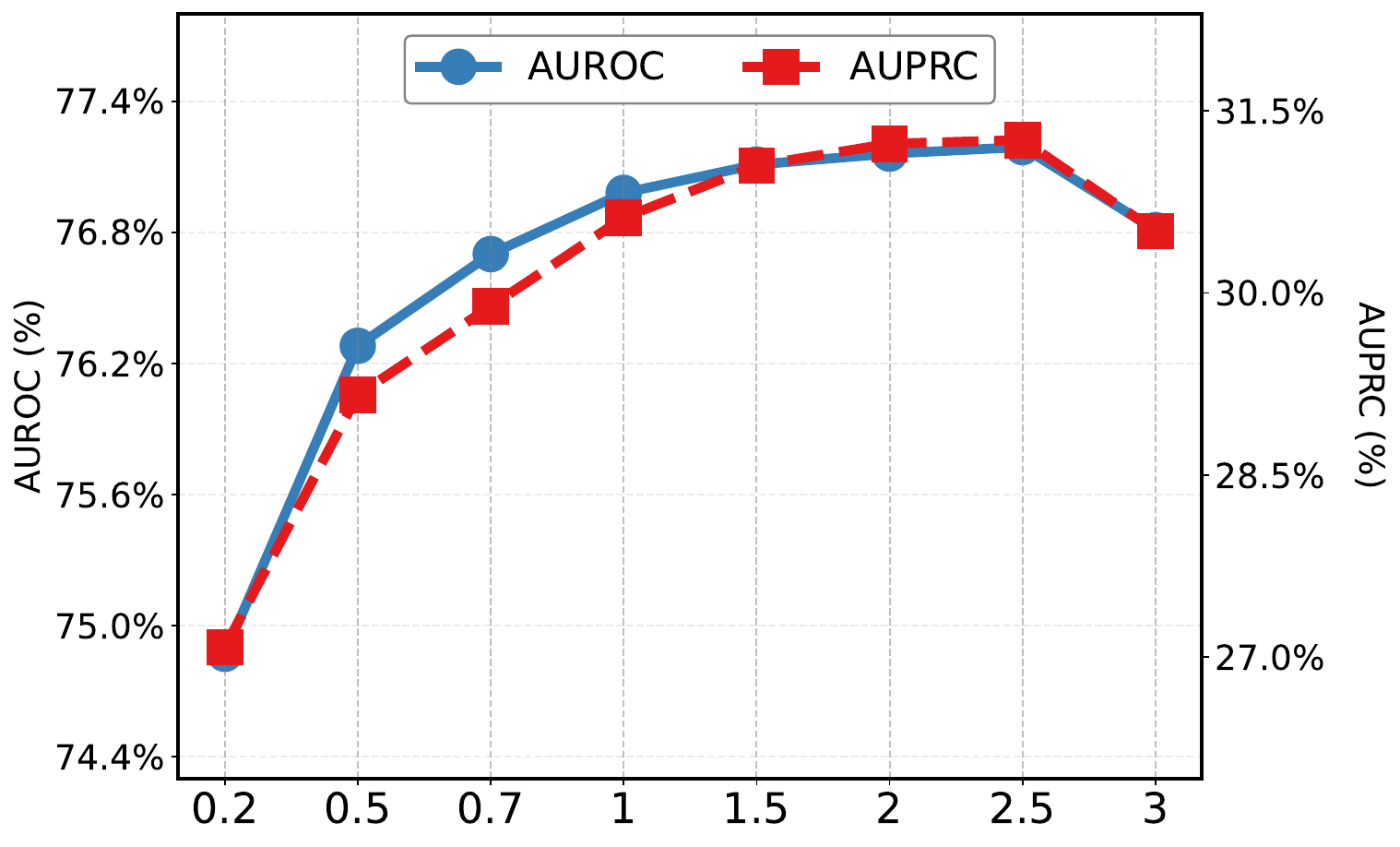}}
\subfloat[Training epochs]{\label{fig:param8}\includegraphics[width=0.25\linewidth]{
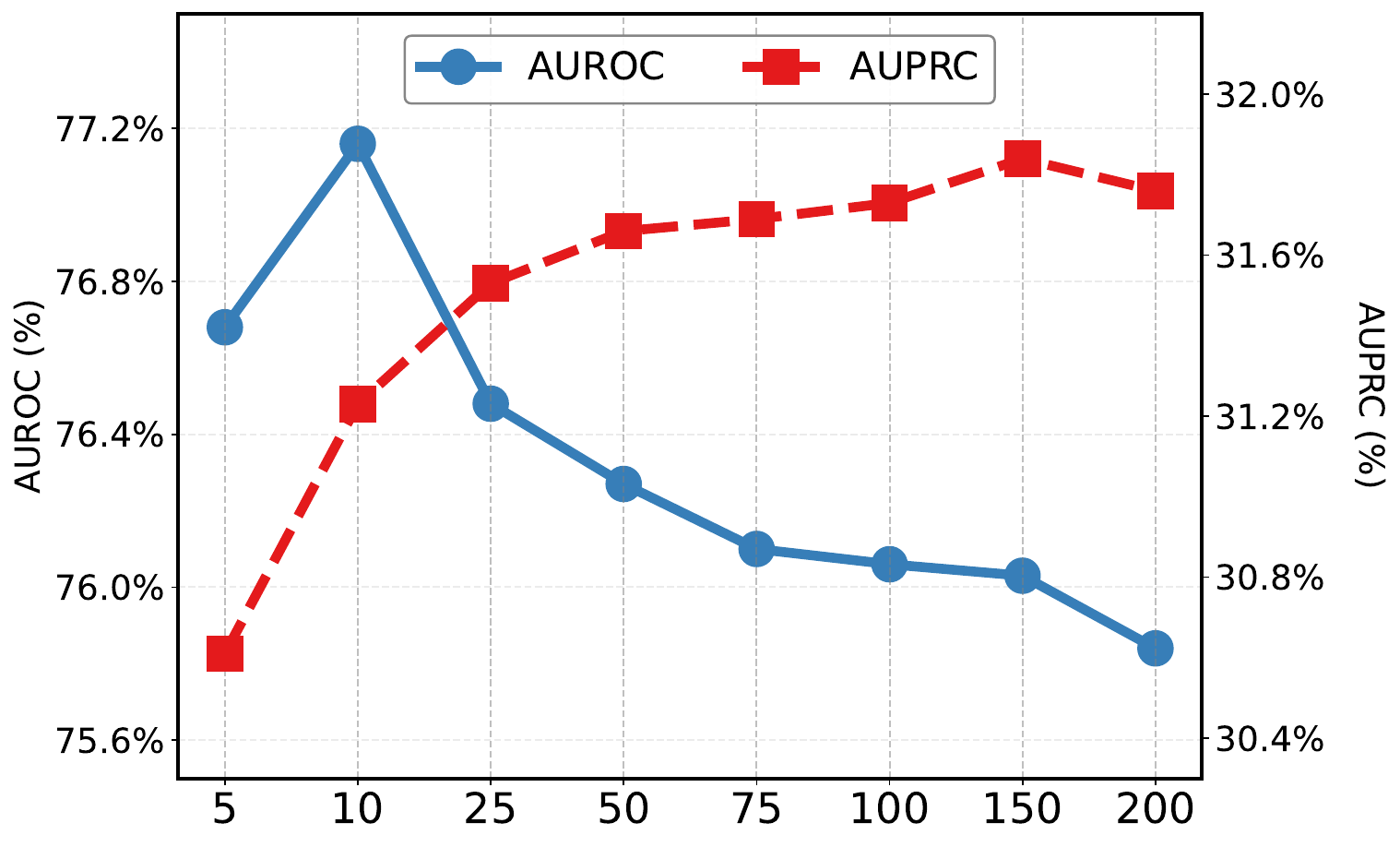}}
\caption{The hyperparameter sensitivity analysis of ProMoS.}
\label{fig:param_appendix1}
\end{figure*}


\end{document}